\newtheorem{lemma}{Lemma}
\DeclareMathOperator*{\argmax}{arg\,max}
\DeclareMathOperator*{\argmin}{arg\,min}
\newtheorem{theorem}{Theorem}[section]
\title{Designing over uncertain outcomes with stochastic sampling Bayesian optimization}
\author{%
  Peter D.~Tonner\\
  Statistical Engineering Division\\
  National Institute of Standards and Technology\\
  Gaithersburg, MD 20874\\
  \texttt{peter.tonner@nist.gov} \\
  \And
  Daniel V. Samarov\\
  Statistical Engineering Division\\
  National Institute of Standards and Technology\\
  Gaithersburg, MD 20874\\
  \texttt{daniel.samarov@nist.gov} \\
  \And
  A. Gilad~Kusne\\
  Materials Measurement Science Division\\
  National Institute of Standards and Technology\\
  Gaithersburg, MD 20874\\
  \texttt{aaron.kusne@nist.gov} \\
}
\begin{document}

\maketitle

\begin{abstract}
Optimization is becoming increasingly common in scientific and engineering domains.
Oftentimes, these problems involve various levels of stochasticity or uncertainty in generating proposed solutions.
Therefore, optimization in these scenarios must consider this stochasticity to properly guide the design of future experiments.
Here, we adapt Bayesian optimization to handle uncertain outcomes, proposing a new framework called stochastic sampling Bayesian optimization (SSBO).
We show that the bounds on expected regret for an upper confidence bound search in SSBO resemble those of earlier Bayesian optimization approaches, with added penalties due to the stochastic generation of inputs.
Additionally, we adapt existing batch optimization techniques to properly limit the myopic decision making that can arise when selecting multiple instances before feedback.
Finally, we show that SSBO techniques properly optimize a set of standard optimization problems as well as an applied problem inspired by bioengineering.
\end{abstract}

\section{Introduction}
\label{sec:org8c00dd4}

Engineering tasks and scientific studies often rely on rapid
identification of an optimal prototype or experimental
condition. For instance, designing genetic sequences to improve
protein fitness \citep{romero-2012-navigating-protein-fitness}. Due
to the commonly high costs of generating proposed solutions at
each iteration and the complex shape of the objective being
targeted, interest has been growing around the use of Bayesian
optimization for these problems
\citep{shahriari-2016-taking-human-out-loop}. Bayesian optimization
(BO) combines statistical modeling with a quantitative
specification of an ideal search to rapidly identify best
solutions and have been applied to diverse industrial and
scientific endeavors including drug discovery
\citep{pyzer-knapp-2018-bayesian-optimization-accelerated},
aerospace engineering
\citep{hebbal-2019-multi-deep-gaussian-processes}, and alloy design
\citep{vellanki-2017-process-bayesian}.

Despite the widespread interest in applying BO to science and
engineering, there remain issues with standard BO techniques
limiting their impact. There are often real-world constraints
that violate the assumptions in standard BO, and various methods
have been developed to augment BO methods to address them
\citep{vellanki-2017-process-bayesian,azimi-2010-myopic,letham-2018-constrained-bayesian-optimization}.
In this work we aim to handle cases where candidate solutions are
not built exactly but instead are drawn from one of many sampling
distributions (Fig. \ref{fig:org46b0044}).  We refer to this approach as \emph{stochastic sampling}
BO (SSBO).

\begin{figure}[btp]
\centering
\includegraphics[width=\textwidth]{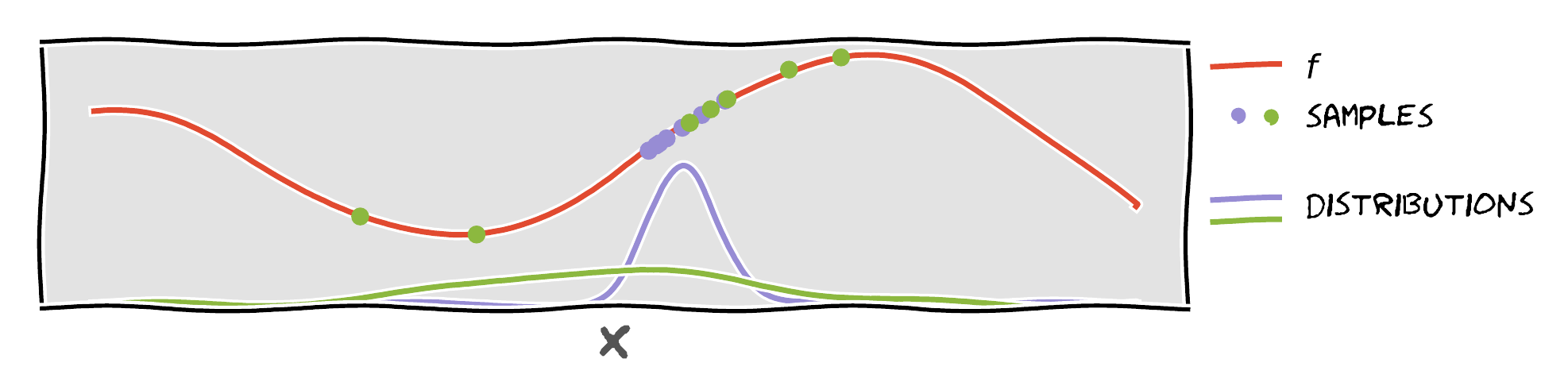}
\caption{\label{fig:org46b0044}
\textbf{Optimization under stochastic sampling} When sampling from an objective function \(f\) (red) stochastically, different sampling distributions must be considered (solid lines). The choice of distribution will impact both the maximum value of sampled observations as well as the variance of observed values.}
\end{figure}

Stochastic sampling occurs in many domains where iterative design
and discovery are made.  In synthetic biology, it is oftentimes
too costly to synthesize individual genetic variants compared to
generating large numbers of variants at once through a
randomization process called mutagenesis
\citep{kinney-2010-using-deep-sequencing,zheng-2017-targeted-mutagenesis}.
Additionally, synthetic biology is a field with a growing
interest in computer aided design
\citep{wu-2019-machine-learning-assisted}. In this work, we present
a simulated design of function through promoters, which control
the relative level of gene expression in bacteria and are a
common target of synthetic biology design studies
\citep{purnick-2009-second-wave-synthetic-biology}.  We focus on
this application in this study, but we believe stochastic
sampling is a common design constraint across engineering and
science.

In this paper, we propose a solution to SSBO that takes the
expectation of an upper confidence bound over each sampling
distribution. We show that our proposed algorithm achieves bounds
on regret comparable to standard BO techniques with an additional
constant term corresponding to sampling from a distribution at
each iteration. We then extend this approach to the situation
where multiple samples from each distribution is
desired. Finally, we test our SSBO procedure on synthetic
objective functions and a simulated bioengineering problem.

\section{Background}
\label{sec:org0af78e3}

\paragraph{Gaussian processes} Gaussian processes (GPs) are
non-parametric models of functional data, where any finite number
of function values are distributed as a multivariate normal
distribution
\citep{rasmussen-2006-gaussian-processes-machine-learning}. Specifically,
we obtain observations of an underlying process \(f(x)\) through
observations \(y(x)\), possibly with observation noise.  A GP is
defined by a mean function \(\mu(x)\) and covariance or kernel
function \(\kappa(x, x')\). The mean function is often assumed
fixed (\(\mu(x) = 0\)) and the behavior of the GP is governed by
the kernel. Kernels generally define an inner product between a
(possibly infinite) feature space on \(x\), and can be defined in a
number of ways depending on the context
\citep{hofmann-2008-kernel-methods-machine-learning}. In cases where
observation noise is present a variance term \(\sigma^2_y\) is
included. Typically, \(x \in \mathcal{R}^d\), although alternative
domains are possible when supported by the kernel.

When trained on data \(\mathbf{x}\), \(\mathbf{y}\), the predictive
mean and variance for a new observation \(y^*\) corresponding to
point \(x^*\) are
\begin{align}
   \mu(x^*) = \kappa(x^*, \mathbf{x}) (\kappa(\mathbf{x}, \mathbf{x}) + I \sigma^2_y)^{-1} \mathbf{y}
   \label{eq:gp-pred-mean}
\end{align}
and
\begin{align}
   \sigma^2(x^*) = \kappa(x^*, x^*) - \kappa(x^*, \mathbf{x}) (\kappa(\mathbf{x}, \mathbf{x}) + I \sigma^2_y)^{-1} \kappa(\mathbf{x}, x^*),
   \label{eq:gp-pred-variance}
\end{align}
respectively. Here, each \((x, y)\) pair corresponds to a
candidate solution \(x\) and the function observation \(y\).

\paragraph{Bayesian optimization} The purpose of BO is to identify
\begin{align}
 x^* = \argmax_{x \in D} f(x),
 \label{eq:optimization}
\end{align}
for some search space \(D\) and optimization target \(f\)
\citep{shahriari-2016-taking-human-out-loop,snoek-2012-practical-bayesian-optimization}.
BO is particularly well suited to searches over large spaces of
potential solutions, costly sampling procedures, and a target
function \(f(x)\) with many local optima. These techniques also
often carry rigorously defined bounds on the distance from the
global optimum at each iteration \(r_t = f(x^*) - f(x_t)\), called
the \emph{regret}
\citep{srinivas-2012-information-theoretic-regret}. 

BO combines a statistical model, typically a GP, with a
quantitative measure of the next desired observation, called the
acquisition
function. \citet{srinivas-2012-information-theoretic-regret}
established provable regret bounds when BO is conducted with a GP
as the function approximator and the upper confidence bound (UCB)
as the acquisition function. UCB has the form
\begin{align}
   \alpha_t(x) = \mu_t(x) + \beta^{1/2}_t \cdot \sigma_t(x),
   \label{eq:ucb}
\end{align}
where \(\mu_t\) and \(\sigma_t\) are the predictive mean and standard
deviation of the GP at iteration \(t\) and \(\beta_t\) is a
predefined, iteration-dependent value. Under mild assumptions of
the GP kernel and \(f\), selecting \(x_t\) from the maximum of the UCB leads
to sub-linear cumulative regret (\(R_T =
     \sum_{i=1}^T r_t\)) with high probability (e.g. \(lim_{t
     \rightarrow \infty} \frac{R_T}{t} = 0\)).

\paragraph{Batch Bayesian optimization} While standard BO assumes
that observations from the process \(f(x)\) are generated one at a
time (referred to as \emph{sequential} optimization), there has been
considerable effort to expand BO techniques to cases where more
than one observation is made at each iteration. These techniques
are referred to as \emph{batch} Bayesian optimization
\citep{kathuria-2016-batched-gaussian-process,gonzalez-2015-batch-bayesian-optimization,desautels-2014-parallelizing-exploration-exploitation}.
In order to avoid myopic over-exploitation, batch BO algorithms
approximate the feedback that would be received if selection was
performed in a sequential manner by modifying the acquisition
function during batch construction.

\paragraph{Constrained BO} Real-world applications often cannot
map directly to the standard BO framework. This has led to many
studies on the use of BO in the presence of constraints
\citep{azimi-2010-myopic,letham-2018-constrained-bayesian-optimization}.
Notable approaches are those which combine constraints on the
optimization problem with stochastic selection of new candidates
\citep{azimi-2016-budgeted-optimization-with,yang-2019-batched-stochastic-bayesian}. In
this work, we instead consider the expected reward given a
sampling distribution directly, and establish provable regret
bounds on BO performed in this manner.

\section{Optimization via stochastic samples}
\label{sec:org273a35c}
We consider the problem of maximizing a function \(f(x)\) when \(x\)
is sampled from a distribution \(\pi(x \vert \theta)\). Specifically
the goal is to solve Eq. \ref{eq:optimization} by choosing
sampling distribution parameters \(\theta \in \Theta\) that minimize
the expected regret with respect to the optimal \(f(x^*)\)
\begin{align}
  r_x(\theta) = E_{\pi(\theta)}[f(x^*) - f(x)] = f(x^*) -  E_{\pi(\theta)}[f(x)],
\end{align}
where \(E_{\pi(\theta)}\) is the expectation over the distribution
\(\pi(x \vert \theta)\).
Given that the choice of \(\theta\) impacts the regret only through
the expectation of \(f(x)\), we reframe the optimization in terms of
the optimal sampling distribution \(\theta^*\):
\begin{align}
   \theta^* = \argmin_{\theta \in \Theta} r_x(\theta) = \argmax_{\theta \in \Theta} E_{\pi(\theta)}[f(x)]. 
\end{align}
We also define the expected regret relative to the optimal
\(\theta^*\) for a chosen \(\theta\),
\begin{align}
   r_\pi(\theta) = & r_x(\theta) - r_x(\theta^*) \\
		    = & E_{\pi(\theta^*)}[f(x)] - E_{\pi(\theta)}[f(x)].
    \label{eq:rpi_theta}
\end{align}
Given that \(r_x(\theta^*)\) is fixed for a given \(f\) and \(\Theta\),
minimizing \(r_x\) (and \(r_\pi\)) corresponds to maximizing the
expectation of \(f\) over \(\theta\).     

Our goal is to develop an iterative procedure where at each
iteration \(t\), we select \(\theta_t\) to minimize our instantaneous
regret \(r_x(\theta)\). Ultimately, we aim to minimize the total
regret at \(T\) rounds \(R_T = \sum_{t = 1}^T r_x(\theta_t)\).  In
order to identify the \(\theta\) with minimal regret at iteration
\(t\), we adopt the UCB bound proposed in Eq. \ref{eq:ucb}.
Specifically, at each iteration \(t\), we select \(\theta_t\) with the
maximal expected value of \(\alpha_t(x)\):
\begin{align}
 \theta_t := \argmax_{\theta \in \Theta} E_{\pi(\theta)}[\alpha_t(x)].
 \label{eq:thetat}
\end{align}
The complete procedure, stochastic sampling GP-UCB (SS-GPUCB), can be seen in
Algorithm \ref{alg:sgpucb} and Fig. \ref{fig:org9b094d3}. In the next section, we describe
bounds on \(R_T\) for both discrete and continuous sampling
distributions when iterative values of \(\theta_t\) are selected using SS-GPUCB.

\begin{algorithm}[t]
   \label{alg:sgpucb}
   \KwData{Sampling distribution $\pi$, parameter space $\Theta$, GP prior with $\mu_0=0$, $\sigma^2_y$, $k$}
   \For{$t = 1, 2, \dots$}
   {
	   Define $\alpha_t(x) = \mu_{t-1}(x) + \beta_t^{1/2} \sigma_{t-1}(x)$\;
	   Choose $\theta_t = \argmax_{\theta \in \Theta} E_{\pi(\theta)}[\alpha_t(x)]$\;
	   Sample $x_t \sim \pi(\theta)$ and $y_t = f(x_t) + \epsilon$\;
	   Update GP with data point $(x_t, y_t)$\;
   }
   \caption{Stochastic sampling GPUCB algorithm}
\end{algorithm}

\begin{figure}[btp]
\centering
\includegraphics[width=\textwidth]{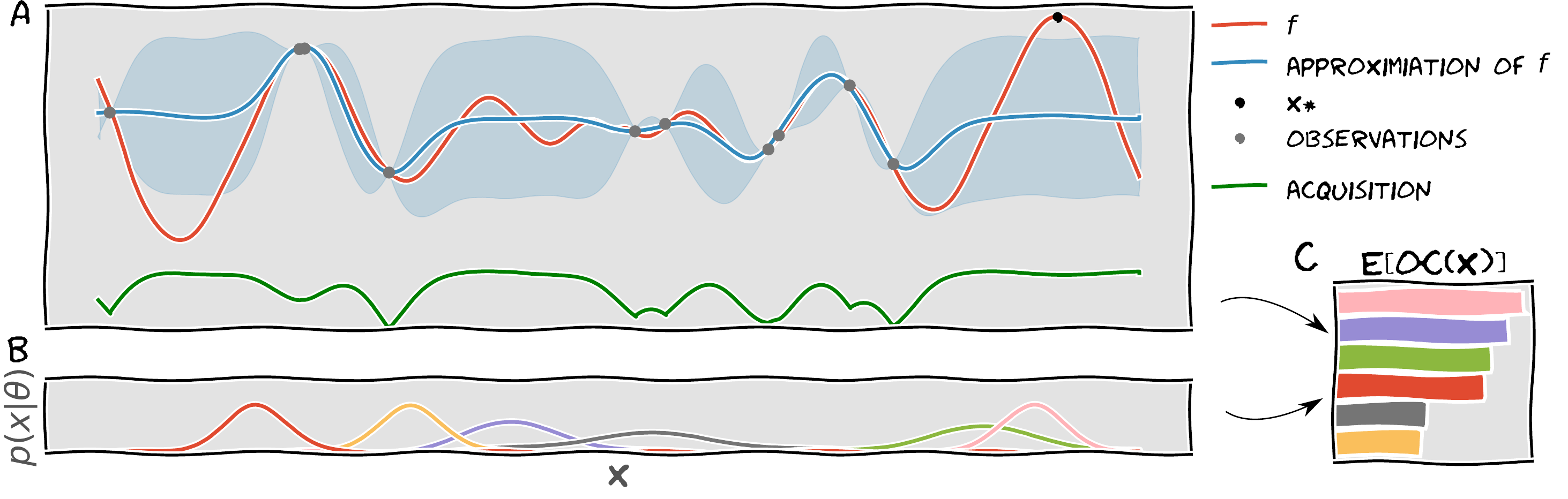}
\caption{\label{fig:org9b094d3}
\textbf{Stochastic sampling Bayesian optimization.} (A) Observations (gray) of the function \(f\) (red) are used to construct a GP (blue, solid line is mean and shaded region is 95\% confidence interval) and a standard BO acquisition function (\(\alpha(x)\), green). (B) Different sampling distributions (\(\pi(\theta)\)) are available from which to generate new \(x\) samples. This example uses normal distributions with parameters \(\theta = \{\mu, \sigma^2\}\). (C) The expectation of \(\alpha(x)\) for various values of \(\theta\).}
\end{figure}

\subsection{Bounding expected regret}
\label{sec:org55921d0}
The regret bounds established in this work build on those
constructed for standard GP-UCB
\citep{srinivas-2012-information-theoretic-regret}. The regret
bounds for SS-GPUCB include a term representing the maximal mutual
information between \(T\) observations \(Y_T\) and the true function
\(f\), \(\gamma_T = \max_{\vert A \vert = T, A \subset D} I(Y_A; f)\).
We define an additional constant in this work, relating the
regret bounds to the sampling distributions used for search.
For sampling distributions \(\pi(x \vert \theta)\), we define a
constant
\begin{align}
\pi^* := \max_{x \in D, \theta \in \Theta} \pi(x \vert \theta).
\end{align}
\(\pi^*\) corresponds to the maximal pdf value of \(\pi\) for all
possible \(x\) and \(\theta\) in the optimization problem.  In order
to ensure that \(\pi^*\) remains well defined, we assume that
\(\Theta\) is bounded. For example, in the case of a Gaussian
distribution where \(\theta = \{\mu, \sigma\}\), \(\lim_{\sigma
     \rightarrow 0} \pi^* = \infty\). In this case, we would assume
that \(\Theta\) will be defined such that \(\sigma > 0 \quad \forall
     \theta \in \Theta\). Using these terms, we now state the main
theoretical results of this work.

For both discrete and continuous distributions, the
bounds on the cumulative expected regret at iteration \(T\) are of the form 
\begin{align}
  \mathcal{O}^*(\sqrt{T\beta_T\gamma_T\pi^*d}).
  \label{eq:regret-general}
\end{align}
Here \(\mathcal{O}^*\) corresponds to a specialized form of the
standard \(\mathcal{O}\) where logarithmic terms are removed and
\(d\) is the size of the search space. Full proofs for
these bounds are available in the appendix. These bounds are
similar to that of
\citet{srinivas-2012-information-theoretic-regret}, with \(\sqrt{\pi^*d}\)
corresponding to an added impact of sampling from a distribution.
These bounds maintain the sub-linear cumulative regret of
standard GPUCB, enabling efficient optimization in stochastic
sampling scenarios.

\subsection{Optimizing over stochastic batch experiments}
\label{sec:org5b6faa1}
Experiments are often conducted generating multiple observations
for a given set of experiment parameters, i.e. a given
\(\theta\). In this case we wish to improve the selection of
\(\theta\) by considering the potential information shared between
individual observations. To this end, we adopt techniques from batch
BO
\citep{desautels-2014-parallelizing-exploration-exploitation}. Our
methods are similar to batch BO in that multiple observations
will be collected in each feedback iteration. Our approach
differs from these techniques, however, in the selection of a
single \(\theta\) value at each iteration from which many
observations will be drawn. We distinguish our approach from
other batch BO methods by referring to this as \emph{stochastic} batch
BO.

We adopt the technique of approximating the expected feedback
that would be received during sequential search through a penalty
applied on the acquisition function. This penalty, \(\varphi(x_i;
     x_j)\), defines the approximate impact that observing \(f(x_j)\)
would have on \(\alpha_t(x_i)\). It is a heuristic that acts as a
local penalizer around \(x_{j}\), meaning that it is
differentiable, \(0 \leq \varphi(x_i; x_j) \leq 1\), and
\(\varphi(x_i; x_j)\) is non-decreasing as the distance between
\(x_i\) and \(x_j\) grows (see appendix for explicit form)
\citep{gonzalez-2015-batch-bayesian-optimization}.

Using a penalty term enforces exploration when selecting \(\theta\)
by decreasing \(\alpha(x)\) around the positions most likely to be
sampled for a given \(\pi(\theta)\) (Fig. \ref{fig:org2477bcd}).  As
we show below, these methods enable independent marginalization
of approximate acquisition values. While other methods of
constructing batch samples exist for Bayesian optimization, they
require combinatorial searches over the previously sampled batch
values (see appendix). As such, we focus on heuristic approaches
here.

At iteration \(t\) we will sample \(B\) new observations from the
distribution \(\pi(x \vert \theta_t)\). For each new point \(x_i\)
(\(1 \leq i \leq B\)), given the previous observations in the batch
\(x_{1:i-1} = \{x_1, \dots, x_{i-1}\}\), the acquisition function
for \(x_i\) is
\begin{align}
   \alpha_t(x_i) \cdot \prod_{j=1}^{i-1} \varphi(x_i; x_{j}).
   \label{eq:aq-batch}
\end{align}

The advantage of the formulation of batch acquisition values in
Eq. \ref{eq:aq-batch} is that the acquisition can be easily
marginalized for each observation \(x_i\) over previous
observations \(x_{1:i-1}\)
(Fig. \ref{fig:org2477bcd}). Specifically, each element of
\(x_{1:i-1}\) is \emph{iid} and so the expectation of the penalty is
\begin{align}
   E_{\pi(\theta)} \Bigg[ \prod_{j=1}^{i-1} \varphi(x_i; x_j) \Bigg] = & \prod_{j=1}^{i-1} E_{\pi(\theta)} \Bigg[\varphi(x_i; x_j) \Bigg] \\
		   	 	  			  	     	   	 = & \prod_{j=1}^{i-1} \varphi_{\pi(\theta)}(x_i) = \big(\varphi_{\pi(\theta)}(x_i)\big)^{i-1}, 
										 \label{eq:expect-gamma}
\end{align}
where we introduce the function \(\varphi_{\pi(\theta)}(x_i)\)
representing the expected penalty over \(\pi(\theta)\)
(Fig. \ref{fig:org2477bcd}C). This then translates to
calculating the expected acquisition value for each iteration \(i\)
and ultimately for varying batch size \(B\)
(Fig. \ref{fig:org2477bcd}E,F).  We choose \(\theta_t\) such that
\begin{align}
  \theta_t = \argmax_{\theta \in \Theta} E_{\pi(\theta)} \Bigg[\sum_{k=1}^B \alpha_t(x) \varphi_{\pi(\theta)}^{k-1}(x) \Bigg].
  \label{eq:theta_t-batch}
\end{align}

This approach, which we call stochastic batch GPUCB (SB-GPUCB),
explicitly captures the trade-off between exploration and
exploitation for varying batch size and sample distribution
variance. Specifically, if batch size is small (e.g. approaching
\(B=1\) for sequential search) then sampling distributions with
smaller variance are preferred because they more precisely target
the current maximum (Fig. \ref{fig:org2477bcd}F, purple
bars). However, as batch size increases, broader sampling
distributions are preferred in order to increase the information
gained from multiple observations in a single batch
(Fig. \ref{fig:org2477bcd}F, yellow bars). This effect is attenuated
as the search begins to properly identify the function optimum,
however (Fig. \ref{fig:org151d5bf}). As the model becomes more
complete, and therefore the acquisition value becomes a more
accurate predictor of the function optimum, lower variance
sampling distributions are preferred regardless of batch
size. Therefore, the local penalty approach can properly adapt
the selection of \(\theta\) at each iteration to best take
advantage of the provided batch size and current knowledge of the
process \(f\).

\begin{figure}[tb]
\centering
\includegraphics[width=\textwidth]{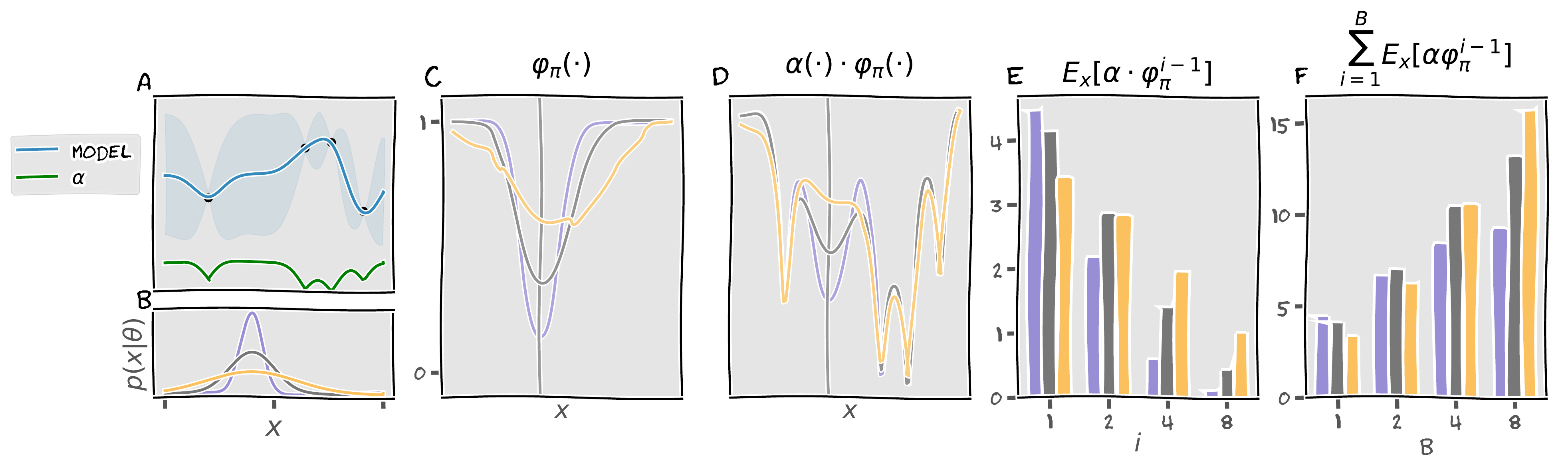}
\caption{\label{fig:org2477bcd}
\textbf{Local penalization of the acquisition function over stochastic batches}. We display the effect of a locally penalized aquisition function for a model and acquisition function (A). Three sampling distributions, with the same mean and differing variances will be compared (B). The expected penalty \(\varphi_\pi\) is different for each distribution, due to differences in probability mass over \(x\) (C). This impacts the expected \(\alpha\) differently for each distribution (D). Expectations for \(\alpha(x) \varphi_\pi^{i-1} (x)\) decreases with \(i\) but for a different rate for each distribution (E). This leads to different optimal distributions depending on batch size (F, Eq. \ref{eq:theta_t-batch}). \(x\) is removed from equations for simplicity.}
\end{figure}

\begin{algorithm}[t]
   \label{alg:sgpucb-batch}
   \KwData{Sampling distribution $\pi$, parameter space $\Theta$, batch size B, GP prior with $\mu_0=0$, $\sigma^2_y$, $k$}
   \For{$t = 1, B, 2B, \dots$}
   {
	   Define $\alpha_t(x) = \mu_{t-1}(x) + \beta_t^{1/2} \sigma_{t-1}(x)$\;
	   Define $\varphi_{\pi(\theta)} = E_{\pi(\theta)}[\varphi(\cdot; x)]$\;
	   Choose $\theta_t = \argmax_{\theta \in \Theta} E_{\pi(\theta)} \Big[\sum_{k=1}^B \alpha_t(x) \varphi_{\pi(\theta)}^{k-1}(x) \Big]$\;
	   Sample $x_j \sim \pi(\theta)$ and $y_j = f(x_j) + \epsilon$ for $t \leq j < t+B$\;
	   Update GP with data-points $(x_j, y_j)$, $t \leq j < t + B$\;
   }
   \caption{Batch stochastic sampling GPUCB algorithm}
\end{algorithm}

\section{Experiments}
\label{sec:org83984e2}

\paragraph{Objective functions} We selected objective functions
from the optimization literature for evaluating our SSBO
algorithms
\citep{surjanovic-virtual-library-simulation-experiments}. Details
of these functions can be found in the appendix.  They cover many
useful characteristics when comparing optimization algorithms
including many local optima, multiple periodicities and
magnitudes, and sharp ridge boundaries.

\paragraph{Alternative acquisition functions} We developed
alternative acquisition functions to compare against our own
procedure. We define the \emph{maximum mean} acquisition as
\(E_{\pi(\theta)}[\mu_t(x)]\), which takes the expectation over
\(\pi(\theta)\) of the predictive mean and is a previously
suggested exploitative strategy
\citep{azimi-2016-budgeted-optimization-with}. \emph{Mean GPUCB} is
defined as \(\alpha_t(E_{\pi(\theta)}[x])\) and corresponds to
considering only the mean of the distribution
\(\pi(\theta)\). Finally, an \emph{independent} model of \(\alpha_t\) is
used to test the impact of \(\varphi\) in batch sampling. The
\emph{independent} acquisition function is not relevant in sequential
search because \(\varphi\) is not used. We also compare to \emph{random}
search, where all \(\theta\) values are chosen uniformly at random.

\paragraph{Evaluating performance} At iteration \(t\), we consider
both instantaneous regret:
\begin{align}
   r_t = f(x^*) - f(x_t),
   \label{eq:instreg}
\end{align}
and simple regret:
\begin{align}
   \min_{1 \leq i \leq t} r_t.
   \label{eq:simpreg}
\end{align}
Each condition was run until a
total of \(200\) observations was received and results were
averaged over \(50\) simulations each.

\paragraph{Sampling distributions.} Sampling distributions were
constructed using a discretization of a normal
distribution. Means were placed at 32 evenly spaced positions
across both input dimensions, for a total of 1024 unique
two-dimensional mean positions. Five standard deviations were
chosen covering values from \(10^{-3}\) to \(2 \times 10^{-1}\) the
input dimension size.

\paragraph{Implementation} Simulations were run in Python, using
the GPy library for model inference \citep{gpy-2012-gpy}. The source
code for running the simulations is provided in the supplement.

\subsection{Sequential stochastic optimization}
\label{sec:org2e6b03b}
\begin{figure}[tbp]
\centering
\includegraphics[width=\textwidth]{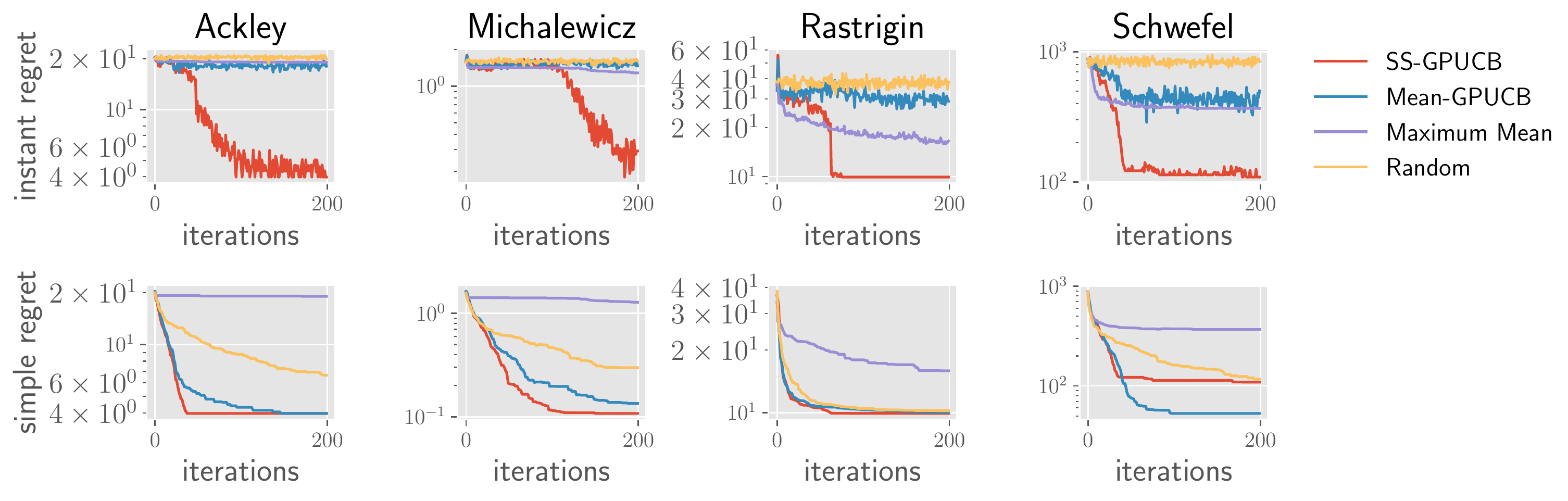}
\caption{\label{fig:orga3d32fd}
\textbf{Sequential SS-GPUCB regret.} For each acquisition function, we show the instantaneous regret (top, Eq. \ref{eq:instreg}) and simple regret (bottom, Eq. \ref{eq:simpreg}) achieved under each objective.}
\end{figure}

We first tested SS-GPUCB under sequential feedback and compared
it to other acquisition functions.  We found that under both
instantaneous and simple regret, SB-GPUCB outperformed all other
methods, with one exception (Fig. \ref{fig:orga3d32fd}). The one
exception occurred when comparing SS-GPUCB to the Mean-GPUCB
acquisition function under the Schwefel objective function. This
difference appeared to be due to an early convergence to the
local optima of that function in a small number of trials (Fig.
\ref{fig:org181cb08}). Of particular note is the stark
difference in performance of SS-GPUCB in instantaneous regret
compared to the other methods considered. The other methods do
not achieve considerable decrease in instantaneous regret over
the course of the simulation, indicating that they do not
properly combine the model predictions with the expected reward
induced by different sampling distributions. This also likely
indicates that gains in simple regret are due at least in part to
the added random chance of improvement created by sampling \(x\)
from a distribution \(\pi(\theta)\).

\subsection{Stochastic Batch optimization}
\label{sec:org909b539}
\begin{figure}[tb]
\centering
\includegraphics[width=\textwidth]{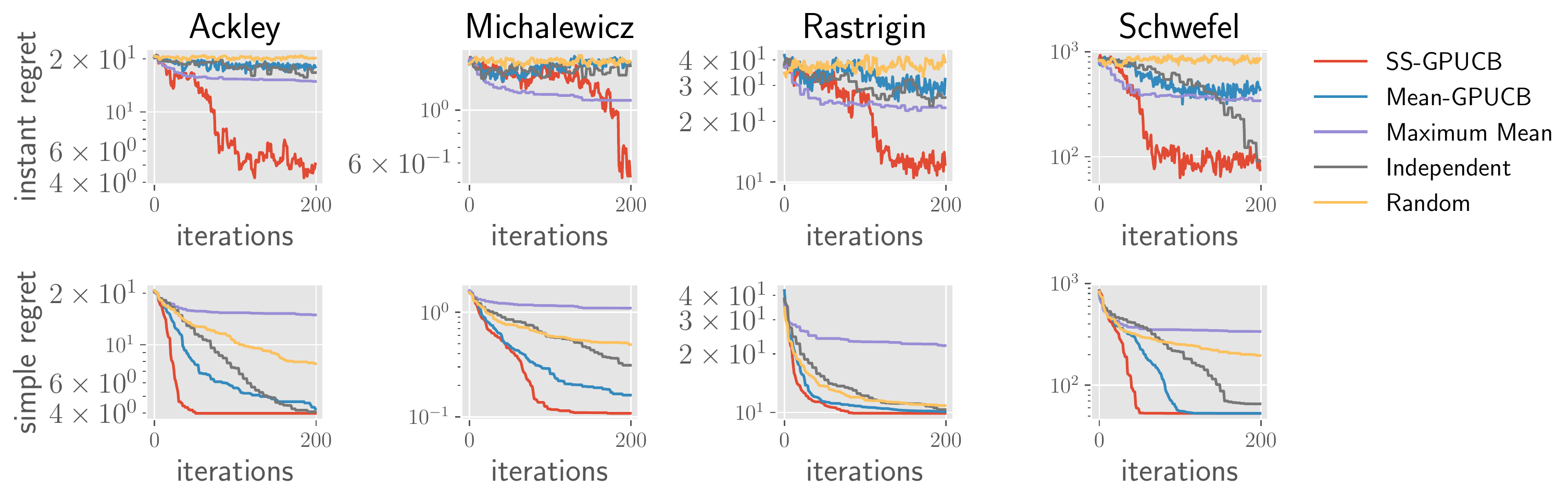}
\caption{\label{fig:org9715007}
\textbf{Batch SS-GPUCB regret} Instantaneous and simple regret for each objective function under batch optimization with a batch size of 5.}
\end{figure}

We next considered the ability of SB-GPUCB to optimize functions
under batch sampling. Again we found that compared to other
acquisition functions, SB-GPUCB rapidly identifies optimal values
of \(f\) (Fig. \ref{fig:org9715007}). This includes the comparison of a
SB-GPUCB acquisition with no local penalty for batch observations
(independent), which appears to lag considerably behind the
locally penalized acquisition in minimizing regret. This
indicates that approximating the change in \(\alpha\) from each
observation \(x_t\) using \(\varphi\) improves optimization over more
naive methods.

\begin{figure}[tb]
\centering
\includegraphics[width=\textwidth]{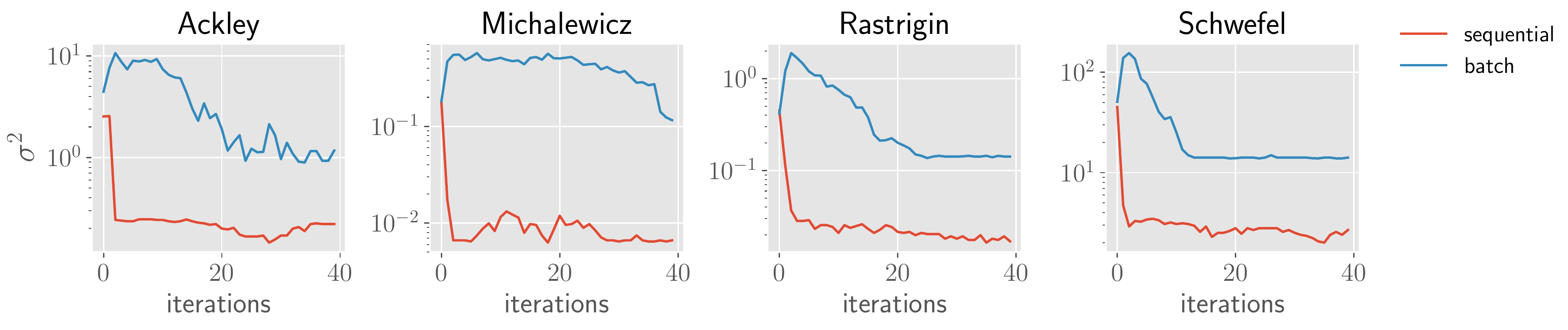}
\caption{\label{fig:orgc1be634}
\textbf{Variance selection under sequential and batch optimization.} Variance of the distribution selected by SS-GPUCB (sequential, red) and SB-GPUCB (batch, blue) at each iteration.}
\end{figure}

Of particular interest is how the sampling distribution \(\pi\) is
used in selecting the most optimal \(\theta\) at each
iteration. During sequential optimization, low variance
distributions will be most advantageous because they allow for
more precise selection of the next observation. However when
observations are collected in batches, it is more useful to
select high variance sampling distributions early in the search
to more rapidly explore the input space.  This behavior is
directly reflected in the sampling variances selected by SS-GPUCB
and SB-GPUCB under sequential and batch optimization,
respectively (Fig. \ref{fig:orgc1be634}). In particular SS-GPUCB
prefers low variance distributions throughout the simulation,
with the exception of the first iteration where no data is
available and all expectations of \(\alpha\) for different values
of \(\theta\) are considered equal. We also see that initially
SB-GPUCB selects higher variance early in the search and steadily
declines over time. However, even during later iterations the
variance selected during batch optimization does not converge to
that selected during sequential optimization.  This is due to the
fact that the batch size remains constant over the simulation and
an intermediate variance results in the highest expected return
over the combination of all batch observations. We expect that if
we adaptively selected the batch size, SB-GPUCB would choose the
minimum variance in combination with a decreasing batch size as
the model increasingly identifies the true optimum
\citep{desautels-2014-parallelizing-exploration-exploitation}.

\subsection{Stochastic batch optimization of biological function}
\label{sec:orgb20e960}
\begin{figure}[tb]
\centering
\includegraphics[width=\textwidth]{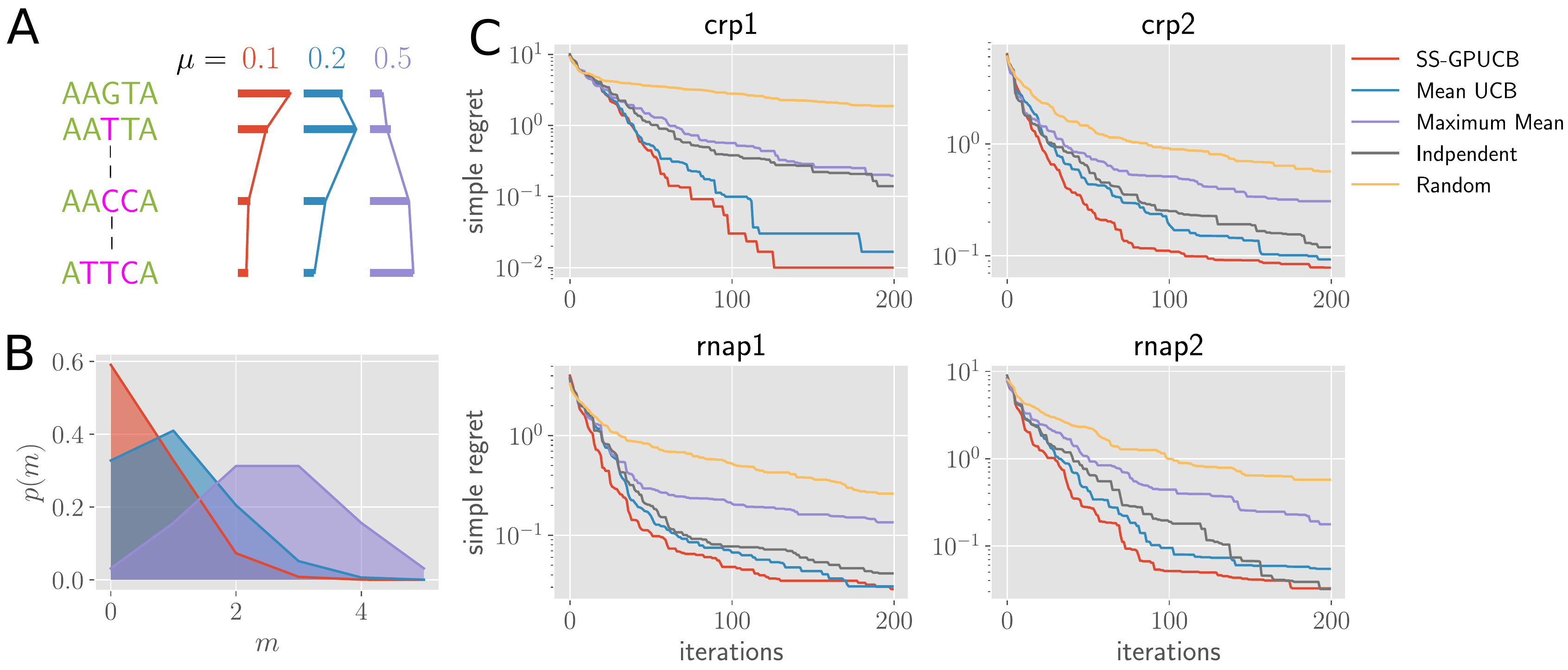}
\caption{\label{fig:org82b2f40}
\textbf{Stochastic batch Bayesian optimization of the \emph{lac} promoter.} (A) An initial DNA sequence (green) will be mutagenized with various mutation rates \(\mu\) (red, blue, and purple). The probability of each mutation (pink letters) depends on the corresponding mutation rate (red, blue, and purple bars). (B) Probability of mutation numbers \(m\) for different mutation rates \(\mu\) (same as in A). (C) Simple regret of each of the four regions of the \emph{lac} promoter for each acquisition function.}
\end{figure}

In order to validate our method for use in real world
applications, we evaluated the performance of SB-GPUCB on a
simulated problem in promoter design. Synthetic design of
promoters is an ideal candidate for SSBO because the search
space grows exponentially with the size of the genomic sequence
and novel sequences are often generated through mutagenesis
\citep{kinney-2010-using-deep-sequencing,currin-2015-synthetic-biology-directed}. Therefore,
applying an upper confidence bound procedure to these tasks
must consider the uncertainty inherent to the generation of new
sequences. We considered a mutagenesis library design problem
where five positions would be randomized with one of four
mutation rates and every possible length five DNA sequence is
used as a starting point for randomization (Fig.
\ref{fig:org82b2f40}A,B). In this case, \(\theta = \{s, \mu\}\) where \(s \in
       \{\text{A,C,T,G}\}^5\) is the starting DNA sequence and \(\mu\) is
the mutation rate.

We developed a simulation of bacterial promoter design using a
published model of the \emph{Escherichia coli} \emph{lac} promoter, which
models the expression levels as a function of promoter sequence
with linear and quadratic terms corresponding to individual
nucleotide and position interactions, respectively
\citep{otwinowski-2013-genotype-to-phenotype}. We use this
published model as an oracle for simulating iterative design of
the \emph{lac} promoter at the regulatory targets of two proteins,
CRP and RNAP.  Each regulator targets two regions, which
contained the largest linear and quadratic terms yielding a
diverse fitness landscape on which to optimize (Figs
\ref{fig:org1e1534a},
\ref{fig:orged6750c}). 

Probabilistic modeling of sequence to phenotype is an active
area of research \citep{riesselman-2018-deep-generative-models}.
We adapted a previous approach of modeling protein fitness with
GPs and a linear kernel to predict the expression levels of
the \emph{lac} promoter as a function of the promoter sequence
\citep{romero-2012-navigating-protein-fitness}. While relatively simple,
we found that this model was able to capture relevant global
trends in the data and would therefore provide a reasonable
test of our algorithm's performance (Fig.
\ref{fig:org3e2ce6c}), with the added advantage of
providing a straight-forward GP model to use for SSBO.

We applied batch SS-GPUCB to each of the four \emph{lac} promoter
regions for each acquisition function for a batch size of
five. We found that for each region, batch SS-GPUCB outperforms
all other methods (Fig. \ref{fig:org82b2f40}C). We further expect that the
difference in performance between batch SS-GPUCB and other
methods would grow considerably as the size of each batch and
sequence space are increased to reflect the sizes commonly seen
in iterative genetic sequence design
\citep{currin-2015-synthetic-biology-directed}.

\section{Conclusion}
\label{sec:orgeb0f6a3}
Stochastic sampling is common in scientific and
engineering domains. We have provided the theoretical groundwork
to enable broad application of BO techniques to optimization tasks
with stochastic sampling, with proof of sub-linear regret
bounds. Our empirical results suggest that this method will be
successful in a broad range of applications, and enhance the use of
BO in real-world optimization scenarios.

\subsubsection*{Acknowledgments}
\label{sec:orgb7efc86}
Funding for this project came from the NIST Innovation in
Measurement Science Grant: Genetic Sensor Foundry for Predictive
Engineering of Living Measurement Systems.  PDT is additionally
funded by the NRC Postdoctoral Fellowship. Analysis was
performed on the NIST Enki HPC cluster.  The identification of
any commercial product or trade name does not imply endorsement
or recommendation by the NIST, nor is it intended to imply that
the materials or equipment identified are necessarily the best
available for the purpose.

\bibliography{references}

\newpage

\appendix

\counterwithin{figure}{section}
\counterwithin{table}{section}

\section{Appendix}
\label{sec:org52daaf2}

\subsection{Precursors}
\label{sec:org747bef3}
This section establishes generally useful properties for the
following proofs. The proofs for discrete and continuous
distributions are in regards to the cumulative expected regret at
iteration \(T\) for each \(\theta_t\) \(1 \leq t \leq T\): R\(_{\text{T}}\) =
\(\sum_{\text{i=1}}^{\text{T}}\) r\(_{\pi}\)(\(\theta_{\text{t}}\)). For notational simplicity, we write
\(r_t\) in place of \(r_\pi(\theta_t)\).

\begin{lemma}
Let $x$ belong to a set $D$. 
Define $\sigma^2_{t-1}$ as the predictive variance of a GP with kernel $k(x, x) \leq 1$ and observation variance $\sigma^2$ trained on $t-1$ observations.
Then,
\begin{align*}
\sum_{i=1}^T \frac{1}{2}\log(1 + \sigma^{-2} \sigma^2_{t-1}(x)) = I(Y_T; f)
\end{align*}
Where $Y_T$ corresponds to $T$ observations selected by SS-GPUCB, and $I(Y_T; f)$ is the mutual information between observations $Y_T$ and $f$.
\end{lemma}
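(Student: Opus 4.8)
The plan is to reproduce the standard Gaussian-process information-gain identity (Lemma~5.3 of \citet{srinivas-2012-information-theoretic-regret}), adapted to the observations $Y_T = (y_1,\dots,y_T)$ actually collected by SS-GPUCB at the adaptively chosen points $x_1,\dots,x_T$, so that the summand $\frac{1}{2}\log(1+\sigma^{-2}\sigma^2_{t-1}(x))$ is read as being evaluated at the selected location $x_t$. First I would unfold the mutual information through its definition,
\begin{align*}
I(Y_T; f) = H(Y_T) - H(Y_T \mid f),
\end{align*}
and then repeatedly apply the chain rule for entropy to both terms, writing $H(Y_T) = \sum_{t=1}^T H(y_t \mid y_{1:t-1})$ and similarly for the conditional entropy. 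The goal is to show that every summand collapses to $\frac{1}{2}\log(1 + \sigma^{-2}\sigma^2_{t-1}(x_t))$.

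The key computational step exploits Gaussianity. Conditioned on the past observations $y_{1:t-1}$, the GP predictive distribution at $x_t$ is Gaussian with mean $\mu_{t-1}(x_t)$ and variance $\sigma^2_{t-1}(x_t)$ from Eq.~\ref{eq:gp-pred-variance}; adding the independent observation noise $\epsilon \sim \mathcal{N}(0,\sigma^2)$ gives $y_t \mid y_{1:t-1} \sim \mathcal{N}(\mu_{t-1}(x_t),\, \sigma^2_{t-1}(x_t) + \sigma^2)$. Using the closed-form differential entropy of a Gaussian, $H(\mathcal{N}(\mu,s^2)) = \tfrac{1}{2}\log(2\pi e\, s^2)$, I would evaluate $H(y_t \mid y_{1:t-1}) = \tfrac{1}{2}\log\big(2\pi e (\sigma^2_{t-1}(x_t) + \sigma^2)\big)$. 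For the conditional term, because the measurement noise is independent of both $f$ and the earlier samples, $H(y_t \mid f, y_{1:t-1}) = H(\epsilon) = \tfrac{1}{2}\log(2\pi e\,\sigma^2)$. Subtracting the two and noting that the $2\pi e$ factors cancel yields $I(y_t; f \mid y_{1:t-1}) = \tfrac{1}{2}\log\!\big(1 + \sigma^{-2}\sigma^2_{t-1}(x_t)\big)$, and summing over $t$ from $1$ to $T$ gives the claimed equality.

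The main subtlety — and the step I would take most care over — is that the points $x_t$ are selected adaptively, since each $x_t$ is drawn according to the $\theta_t$ produced by the acquisition rule in Eq.~\ref{eq:thetat}, which depends on $y_{1:t-1}$. One must check that the entropy chain rule and the cancellation of the noise entropy remain valid under this data-dependent sampling. This works because the decomposition $I(Y_T;f)=\sum_t I(y_t;f\mid y_{1:t-1})$ is an identity for the joint law of $(Y_T,f)$ regardless of how the sampling locations are chosen, and because, conditioned on $y_{1:t-1}$, the posterior variance $\sigma^2_{t-1}(x_t)$ is a well-defined (if data-dependent) quantity. The only structural hypothesis actually invoked is $k(x,x)\le 1$, which together with $\sigma^2>0$ merely keeps each term finite; it is not needed for the identity itself but anchors the subsequent comparison to $\gamma_T$.
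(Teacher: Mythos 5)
Your argument is correct and is essentially the same as the paper's: both unfold $I(Y_T;f)=H(Y_T)-H(Y_T\mid f)$ via the entropy chain rule, evaluate each conditional entropy in closed form for a Gaussian, and cancel the $2\pi e\,\sigma^2$ noise term to leave $\tfrac{1}{2}\log(1+\sigma^{-2}\sigma^2_{t-1}(\cdot))$. The one point of divergence is interpretive rather than structural: you evaluate the predictive variance at the adaptively selected point $x_t$ (recovering the literal Lemma~5.3 identity of \citet{srinivas-2012-information-theoretic-regret} for the observations actually collected), whereas the paper's proof deliberately holds a single $x\in D$ fixed across all iterations --- a reading it needs later, when the resulting variance bound is summed over every $x\in D$ to obtain the $\vert D\vert$ factor in the regret bound --- so you should be aware that your version, while correct, is not quite the form invoked downstream.
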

\begin{proof}
The proof is adapted from \citep{srinivas-2012-information-theoretic-regret}, Lemma 5.3 but assuming a single $x \in D$. As we show below, our regret bounds can be stated with $x$ held constant and the expectation taken over each $\theta_t$.
First, note that
\begin{align*}
I(Y_T; f) = H(Y_T) - H(Y_T \vert f).
\end{align*}
We have, for a single observation $y_T$:
\begin{align}
H(y_T \vert f) = \frac{1}{2} \log ( 2 \pi e \sigma^2 )
\end{align}
for all $t$ because $y_t$ given $f$ is a normal variate.
We also have
\begin{align*}
H(Y_T) = & H(Y_{T-1}) + H(y_t \vert Y_{T-1}) \\ 
       = & H(Y_{T-1}) + \log(2 \pi e (\sigma^2 + \sigma_{t-1}^2(x)))/2 \\
       = & \frac{1}{2} \sum_{i=1}^T \log\Big(2 \pi e (\sigma^2 + \sigma_{t-1}^2(x))\Big)
\end{align*}
The second equation follows from the fact that $\sigma_{t-1}^2$ does not depend on the values of $Y_T$.
Finally we have
\begin{align*}
I(Y_T; f) = & \sum_{i=1}^T \Big[\frac{1}{2} \log\Big(2 \pi e (\sigma^2 + \sigma_{t-1}^2(x))\Big) - \frac{1}{2} \log ( 2 \pi e \sigma^2 ) \Big]\\
	  = & \sum_{i=1}^T \frac{1}{2} \log(1 + \sigma^{-2} \sigma^2_{t-1}(x))       
  \end{align*}
\end{proof}

From here we establish a useful bound on the sum of variance
terms used in the following proofs.
\begin{lemma}
\label{lemma:sum-variance}
Take a series over the variable $t$, $1 \leq t \leq T$. 
Suppose that the $t$ dependent variable $\beta_t$ is non-decreasing.
Additionally, let there be an observation $x \in D$ for each iteration $t$.
Let $\sigma_{t-1}(x)$ be the predictive variance at iteration $t-1$ of a GP with kernel $k$ such that $k(x, x) \leq 1$ for all $x$ and noise variance $\sigma^2$.
Let $C_1 = \frac{8}{log(1+\sigma^{-2})}$.
Then,
\begin{align*}
 \sum_{t=1}^T 4 \beta_t \sigma^2_{t-1}(x) \leq \beta_T \gamma_T C_1
\end{align*}
\end{lemma}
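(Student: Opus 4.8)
The plan is to reduce the weighted sum to an unweighted sum of posterior variances, bound each variance by a logarithm of itself, and then invoke the previous lemma to convert the resulting sum into the mutual information $I(Y_T; f)$, which is in turn controlled by $\gamma_T$. First I would exploit monotonicity of $\beta_t$ to pull the largest weight out of the sum, writing $\sum_{t=1}^T 4\beta_t \sigma^2_{t-1}(x) \leq 4\beta_T \sum_{t=1}^T \sigma^2_{t-1}(x)$. This isolates the purely variance-driven quantity $\sum_{t=1}^T \sigma^2_{t-1}(x)$, which is where the substantive work lies.

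The central step is an elementary but crucial pointwise inequality bounding each posterior variance by a logarithm of itself. I would first observe that the GP posterior variance equals the prior variance minus a non-negative quadratic form, so with $k(x,x) \leq 1$ we have $0 \leq \sigma^2_{t-1}(x) \leq 1$ for every $t$. On this range I would establish that $u \leq \frac{1}{\log(1+\sigma^{-2})} \log(1 + \sigma^{-2} u)$ for all $u \in [0,1]$, by showing that the map $u \mapsto u / \log(1 + \sigma^{-2} u)$ is non-decreasing and hence attains its maximum at $u = 1$. Applying this with $u = \sigma^2_{t-1}(x)$ and summing gives $\sum_{t=1}^T \sigma^2_{t-1}(x) \leq \frac{1}{\log(1+\sigma^{-2})} \sum_{t=1}^T \log(1 + \sigma^{-2}\sigma^2_{t-1}(x))$.

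With the sum now expressed through the logarithmic information terms, I would invoke the preceding lemma, which identifies $\sum_{t=1}^T \frac{1}{2}\log(1+\sigma^{-2}\sigma^2_{t-1}(x))$ with $I(Y_T; f)$, yielding $\sum_{t=1}^T \log(1+\sigma^{-2}\sigma^2_{t-1}(x)) = 2 I(Y_T; f)$. Since $I(Y_T; f) \leq \gamma_T$ by the definition of the maximal information gain, this gives $\sum_{t=1}^T \sigma^2_{t-1}(x) \leq 2\gamma_T / \log(1+\sigma^{-2})$. Combining with the factored-out weight produces $4\beta_T \cdot 2\gamma_T/\log(1+\sigma^{-2}) = \beta_T \gamma_T C_1$, which is exactly the claimed bound.

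The main obstacle is the pointwise inequality in the middle step: establishing both that the posterior variance stays in $[0,1]$ (so that $u$ lies in the regime where the logarithmic bound is valid) and that $u/\log(1+\sigma^{-2}u)$ is monotone. The monotonicity reduces to the standard fact that $\log(1+v) \geq v/(1+v)$ for $v \geq 0$, so the only real care needed is in justifying the variance bound from $k(x,x) \leq 1$ and verifying that the constant produced, $8/\log(1+\sigma^{-2})$, matches the stated $C_1$.
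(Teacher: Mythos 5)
Your proposal is correct and follows essentially the same route as the paper's proof: pull out $\beta_T$ by monotonicity, bound each variance term by a constant multiple of $\log(1+\sigma^{-2}\sigma^2_{t-1}(x))$ using the monotonicity of $v \mapsto v/\log(1+v)$ together with $\sigma^2_{t-1}(x) \leq k(x,x) \leq 1$, and then identify the resulting sum with $2I(Y_T;f) \leq 2\gamma_T$ via the preceding lemma. The paper states the key pointwise inequality in the equivalent variable $s^2 = \sigma^{-2}\sigma^2_{t-1}(x) \in [0,\sigma^{-2}]$ with constant $C_2 = \sigma^{-2}/\log(1+\sigma^{-2})$, but this is the same inequality and yields the same constant $C_1 = 8/\log(1+\sigma^{-2})$.
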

\begin{proof}
First, we have
\begin{align*}
 4 \beta_t \sigma^2_{t-1}(x) \leq 4 \beta_T \sigma^2_{t-1}(x)
 \end{align*}
because $\beta_t$ is non-decreasing.
Next,
\begin{align*}
 4 \beta_T \sigma^2_{t-1}(x) \leq & 4 \beta_T \sigma^2 \sigma^{-2} \sigma^2_{t-1}(x) \\
			     \leq & 4 \beta_T \sigma^2 C_2 \log(1 + \sigma^{-2} \sigma_{t-1}(x))
 \end{align*}
 where $C_2 = \frac{\sigma^{-2}}{\log(1+\sigma^{-2})}$.
 This is due to the fact that $s^2 \leq C_2 \log(1 + s^2)$ for $s^2 \in [0, \sigma^2]$ and $\sigma^{-2} \sigma_{t-1}^2(x) \leq \sigma^{-2} k(x, x) \leq \sigma^{-2}$.
 Then, due to the fact that $C_2 = \frac{C_1}{8 \sigma^2}$, we have
 \begin{align*}
 4 \beta_T \sigma^2 C_2 \log(1 + \sigma^{-2} \sigma_{t-1}(x)) \leq \beta_T C_1 \Big[\frac{1}{2} \log(1 + \sigma^{-2}\sigma_{t-1}(x))\Big]
 \end{align*}
 Therefore we have,
 \begin{align}
 \sum_{t=1}^T 4 \beta_t \sigma^2_{t-1}(x) \leq \beta_T \gamma_T C_1
 \label{eq:info-gain-bound}
\end{align}

\end{proof}

As described in the main text, for sampling distributions \(\pi(x
     \vert \theta)\), we define a constant
\begin{align*}
\pi^* := \max_{x \in D, \theta \in \Theta} \pi(x \vert \theta).
\end{align*}
\(\pi^*\) corresponds to the maximal pdf value of \(\pi\) for all
possible \(x\) and \(\theta\) in the optimization problem. This
constant is useful for bounding the expected pdf value of \(\pi\)
for a given \(\theta\), which arises in our proofs. Specifically,
\begin{align}
\label{eq:pistar-discrete}
E_{\pi(\theta)}[\pi(\theta)] = \sum_{x \in D} \pi^2(x \vert \theta) \leq \pi^* \sum_{x \in D} \pi(x \vert \theta) \leq \pi^*
\end{align}
for the discrete case, and
\begin{align}
\label{eq:pistar-continuous}
E_{\pi(\theta)}[\pi(\theta)] = \int \pi^2(x \vert \theta) dx \leq \pi^* \int \pi(x \vert \theta) dx \leq \pi^*
\end{align}
in the continuous case.
Using these terms, we now state the main theoretical
results of this work.

\subsection{Discrete distribution}
\label{sec:org2deb103}

Here, we consider a sample space, and sampling distribution, with
finite dimensionality. Specifically, \(x \in D\) and \(\vert D \vert
     < \infty\). Each sampling distribution is then well defined on
this space, \(\pi(x \vert \theta) > 0\) \(\forall x \in D\) and
\(\sum_{x \in D} \pi(x \vert \theta) = 1\) \(\forall \theta \in
     \Theta\). The proof for this case follows similarly to that of the
finite dimensional case of the original GP-UCB paper
\citep{srinivas-2012-information-theoretic-regret}.

\begin{theorem}
Let $\delta \in \{0, 1\}$ and $\beta_t = 2 \log(\vert D \vert \pi^2 / 6 \delta)$.
Then the regret associated with performing SS-GPUCB has the following probabilistic bound:
\begin{align}
  Pr \Big( R_T \leq \sqrt{T C_1 \beta_T \gamma_T \vert D \vert \pi^*} \Big) \geq 1 - \delta
\end{align}
\end{theorem}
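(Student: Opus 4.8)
The plan is to adapt the finite-domain argument of \citet{srinivas-2012-information-theoretic-regret} to the expectation-over-$\pi(\theta)$ setting, combining two ingredients already assembled above: a high-probability confidence band around $f$, and the information-gain control of the summed predictive variances from Lemma \ref{lemma:sum-variance}. First I would establish that, with probability at least $1-\delta$, the band $\vert f(x) - \mu_{t-1}(x)\vert \leq \beta_t^{1/2}\sigma_{t-1}(x)$ holds simultaneously for every $x \in D$ and every $t \geq 1$. Since $f(x)$ given the first $t-1$ observations is Gaussian with mean $\mu_{t-1}(x)$ and variance $\sigma_{t-1}^2(x)$, the tail bound $\Pr(\vert z\vert > c) \leq e^{-c^2/2}$ gives failure probability $e^{-\beta_t/2}$ at each $(x,t)$; taking the iteration-dependent choice $\beta_t = 2\log(\vert D\vert t^2\pi^2/6\delta)$ makes the union bound over $D$ and over $t$ close, since $\sum_{t\geq1}\vert D\vert e^{-\beta_t/2} = \tfrac{6\delta}{\pi^2}\sum_t t^{-2} = \delta$. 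This $\beta_t$ is non-decreasing, as required by Lemma \ref{lemma:sum-variance}.

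Conditioning on this event, the next step is to bound the per-round regret $r_t = E_{\pi(\theta^*)}[f(x)] - E_{\pi(\theta_t)}[f(x)]$. Taking expectations of the confidence band against any distribution and using Jensen's inequality, I have $E_{\pi(\theta)}[f(x)] \leq E_{\pi(\theta)}[\alpha_t(x)]$ and $E_{\pi(\theta)}[f(x)] \geq E_{\pi(\theta)}[\mu_{t-1}(x) - \beta_t^{1/2}\sigma_{t-1}(x)]$. Chaining the upper band at $\theta^*$, the optimality of $\theta_t$ from Eq.~\ref{eq:thetat} (so that $E_{\pi(\theta^*)}[\alpha_t(x)] \leq E_{\pi(\theta_t)}[\alpha_t(x)]$), and the lower band at $\theta_t$ yields
\begin{align*}
r_t \leq E_{\pi(\theta_t)}[\alpha_t(x)] - E_{\pi(\theta_t)}[\mu_{t-1}(x) - \beta_t^{1/2}\sigma_{t-1}(x)] = 2\beta_t^{1/2}\, E_{\pi(\theta_t)}[\sigma_{t-1}(x)].
\end{align*}
Squaring and applying Jensen once more ($\big(E[\sigma_{t-1}]\big)^2 \leq E[\sigma_{t-1}^2]$) gives $r_t^2 \leq 4\beta_t\, E_{\pi(\theta_t)}[\sigma_{t-1}^2(x)]$. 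I then invoke the defining property of $\pi^*$: in the discrete case $E_{\pi(\theta_t)}[\sigma_{t-1}^2(x)] = \sum_{x\in D}\pi(x\vert\theta_t)\sigma_{t-1}^2(x) \leq \pi^*\sum_{x\in D}\sigma_{t-1}^2(x)$, converting the expectation into an unweighted sum over the finite domain at the cost of the factor $\pi^*$.

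Summing over $t$ and exchanging the two finite sums, $\sum_{t=1}^T r_t^2 \leq \pi^* \sum_{x\in D}\sum_{t=1}^T 4\beta_t\sigma_{t-1}^2(x)$. Applying Lemma \ref{lemma:sum-variance} to each fixed $x\in D$ bounds the inner sum by $\beta_T\gamma_T C_1$, and summing over the $\vert D\vert$ points gives $\sum_{t=1}^T r_t^2 \leq C_1\beta_T\gamma_T\vert D\vert\pi^*$. A final Cauchy--Schwarz step, $R_T = \sum_t r_t \leq \sqrt{T\sum_t r_t^2}$, delivers $R_T \leq \sqrt{T C_1\beta_T\gamma_T\vert D\vert\pi^*}$ on the probability-$(1-\delta)$ event, which is the claim. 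The main obstacle I expect is the legitimacy of applying Lemma \ref{lemma:sum-variance} to every fixed $x\in D$ at once: the information-gain identity of Lemma 1 is cleanest for the sequence of points actually queried, so I would need to argue that for an arbitrary fixed $x$ the variances $\sigma_{t-1}^2(x)$ decay at least as fast as they would if $x$ were re-queried each round, whose cumulative log-variance is $\leq \gamma_T$ by the maximality in $\gamma_T = \max_{\vert A\vert=T}I(Y_A;f)$. Pinning down this domination, and confirming that the factors $\vert D\vert$ and $\pi^*$ are not double-counted once $x$ ranges over all of $D$, is the delicate accounting in the argument.
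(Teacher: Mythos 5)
Your proof follows essentially the same route as the paper's: the same high-probability confidence band, the same chain through the optimality of $\theta_t$ to obtain $r_\pi(\theta_t) \leq 2\beta_t^{1/2}E_{\pi(\theta_t)}[\sigma_{t-1}(x)]$, the same extraction of the factor $\pi^*$ (you use Jensen plus the pointwise bound $\pi(x\vert\theta)\leq\pi^*$ where the paper uses Cauchy--Schwarz plus $\sum_x\pi^2(x\vert\theta)\leq\pi^*$, but both yield $r_t^2\leq 4\beta_t\pi^*\sum_{x\in D}\sigma_{t-1}^2(x)$), the same per-$x$ application of Lemma \ref{lemma:sum-variance} summed over $\vert D\vert$ points, and the same final Cauchy--Schwarz step. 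The subtlety you flag at the end --- that bounding $\sum_t \tfrac12\log(1+\sigma^{-2}\sigma_{t-1}^2(x))$ by $\gamma_T$ for an \emph{arbitrary fixed} $x$ rather than for the actually queried sequence requires a domination argument --- is a genuine point, but it is a gap present in the paper's own proof as well rather than one introduced by your approach.
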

Proof to follow.

\begin{lemma}[\citep{srinivas-2012-information-theoretic-regret} Lemma 5.1]
\label{lemma:dist-bound}
  Pick $\delta \in (0, 1)$ and set $\beta_t = 2 \log(\vert D \vert \pi_t / \delta)$, 	
  such that $\sum_{t \geq 1} 1/\pi_t = 1$ and $\pi_t > 0$. Then,
  \begin{align}
   \vert f(x) - \mu_{t-1}(x) \vert  \leq \beta_t^{1/2} \sigma_{t-1}(x) \quad \forall x \in D, \forall t \geq 1 
  \end{align}
  holds with probability $\geq 1 - \delta$.
\end{lemma}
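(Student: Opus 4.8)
The plan is to prove this as a simultaneous (over all $x \in D$ and all $t \ge 1$) high-probability confidence band, following the structure of the finite-domain argument in \citep{srinivas-2012-information-theoretic-regret}. The essential ingredient is the defining property of the GP posterior: conditioned on the first $t-1$ observations, the latent value $f(x)$ is normally distributed with mean $\mu_{t-1}(x)$ and variance $\sigma_{t-1}^2(x)$ as given by Eqs. \ref{eq:gp-pred-mean} and \ref{eq:gp-pred-variance}. Hence the standardized residual $(f(x) - \mu_{t-1}(x))/\sigma_{t-1}(x)$ is, conditionally, a standard normal variate, and this is what lets me control the deviation of $f$ from its predictive mean.

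First, I would fix a single $x \in D$ and a single iteration $t$, and apply the standard Gaussian tail bound $\Pr(|Z| > c) \le e^{-c^2/2}$ for $Z \sim \mathcal{N}(0,1)$ and $c > 0$. Taking $c = \beta_t^{1/2}$ and substituting the prescribed $\beta_t = 2\log(|D|\pi_t/\delta)$ gives a per-pair failure probability
\begin{align*}
\Pr\big(|f(x) - \mu_{t-1}(x)| > \beta_t^{1/2}\sigma_{t-1}(x)\big) \le e^{-\beta_t/2} = \frac{\delta}{|D|\pi_t}.
\end{align*}
The point to emphasize is that this is a conditional statement that holds for every realization of the past observations, because the conditional law of $f(x)$ is Gaussian with the stated parameters regardless of the observed values; it therefore also holds unconditionally.

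Next I would remove the conditioning on $x$ and $t$ by a union bound. Summing the per-pair failure probability over the $|D|$ points of $D$ yields a failure probability of at most $\delta/\pi_t$ at iteration $t$, and summing this over $t \ge 1$ yields a total failure probability of at most $\delta \sum_{t \ge 1} 1/\pi_t = \delta$, where I invoke the hypothesis $\sum_{t \ge 1} 1/\pi_t = 1$. The complementary event, namely that the claimed bound holds for all $x \in D$ and all $t \ge 1$ simultaneously, therefore has probability at least $1 - \delta$, as required.

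The main obstacle is conceptual rather than computational: justifying that the per-pair Gaussian tail bound survives the conditioning on the (random, adaptively chosen) past observations. Since $\mu_{t-1}$ and $\sigma_{t-1}$ are themselves functions of those observations, one must be careful that the tail bound applies to the conditional distribution uniformly over the conditioning, which is exactly what the GP posterior guarantees; once this is stated cleanly, the two union bounds and the choice of $\beta_t$ make the constants cancel precisely. A minor technical point worth recording is the Gaussian tail inequality itself, obtainable from $\Pr(Z > c) \le \tfrac{1}{2} e^{-c^2/2}$ together with symmetry, which I would state rather than re-derive.
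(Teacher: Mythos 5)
Your proof is correct and matches the standard argument: the paper itself simply defers to \citep{srinivas-2012-information-theoretic-regret}, Lemma 5.1, and what you have written is precisely that reference's proof --- the conditional Gaussian tail bound $\Pr(|Z|>c)\le e^{-c^2/2}$ with $c=\beta_t^{1/2}$, followed by a union bound over $x\in D$ and $t\ge 1$ using $\sum_{t\ge 1}1/\pi_t=1$. Your remark on why the tail bound survives conditioning on the adaptively chosen history is the right subtlety to flag and is handled correctly.
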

\begin{proof}
  See \citep{srinivas-2012-information-theoretic-regret} Lemma 5.1.
\end{proof}

\begin{lemma}
  \label{lemma:discrete_regret_iter}
  Fix $t \geq 1$. If $\vert f(x) - \mu_{t-1}(x) \vert \leq \beta_t^{1/2} \sigma_{t-1}(x)$ for all $x \in D$,
  then the expected regret $r_\pi(\theta_t) = \sum_{x \in D} f(x) \pi(x | \theta^*) - \sum_{x \in D} f(x) \pi(x | \theta_t)$ (Eq. \ref{eq:rpi_theta}) is bounded by $2\beta_t^{1/2} E_{\pi(x\vert \theta)}[\sigma_{t-1}(x)]$.
\end{lemma}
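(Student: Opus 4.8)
The plan is to mirror the classical GP-UCB instantaneous-regret argument of \citet{srinivas-2012-information-theoretic-regret}, but with every pointwise quantity replaced by its expectation over the relevant sampling distribution. The single new ingredient is the selection rule \eqref{eq:thetat}, which lets me pass from $\theta^*$ to the chosen $\theta_t$ in expectation. Throughout I work on the event stated as the hypothesis, namely that the confidence bound $\vert f(x) - \mu_{t-1}(x)\vert \leq \beta_t^{1/2}\sigma_{t-1}(x)$ holds simultaneously for all $x \in D$ (this is exactly what Lemma \ref{lemma:dist-bound} guarantees with probability $\geq 1-\delta$).

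First I would bound the optimal term from above. The confidence hypothesis gives $f(x) \leq \mu_{t-1}(x) + \beta_t^{1/2}\sigma_{t-1}(x) = \alpha_t(x)$ for every $x \in D$; since this holds pointwise, taking the expectation under $\pi(\theta^*)$ preserves the inequality by linearity, yielding $E_{\pi(\theta^*)}[f(x)] \leq E_{\pi(\theta^*)}[\alpha_t(x)]$. Next I would invoke the defining property of $\theta_t$ from \eqref{eq:thetat}: because $\theta_t$ maximizes $E_{\pi(\theta)}[\alpha_t(x)]$ over $\theta \in \Theta$, and $\theta^* \in \Theta$, we have $E_{\pi(\theta^*)}[\alpha_t(x)] \leq E_{\pi(\theta_t)}[\alpha_t(x)]$. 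Chaining these gives $E_{\pi(\theta^*)}[f(x)] \leq E_{\pi(\theta_t)}[\alpha_t(x)]$.

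Then I would bound the chosen term from below. Again using the confidence hypothesis, $f(x) \geq \mu_{t-1}(x) - \beta_t^{1/2}\sigma_{t-1}(x)$ pointwise, so taking the expectation under $\pi(\theta_t)$ gives $E_{\pi(\theta_t)}[f(x)] \geq E_{\pi(\theta_t)}[\mu_{t-1}(x)] - \beta_t^{1/2} E_{\pi(\theta_t)}[\sigma_{t-1}(x)]$. Subtracting this from the upper bound of the previous paragraph and expanding $\alpha_t = \mu_{t-1} + \beta_t^{1/2}\sigma_{t-1}$, the two mean terms $E_{\pi(\theta_t)}[\mu_{t-1}(x)]$ cancel and the two standard-deviation terms add, leaving exactly
\begin{align*}
 r_\pi(\theta_t) = E_{\pi(\theta^*)}[f(x)] - E_{\pi(\theta_t)}[f(x)] \leq 2\beta_t^{1/2} E_{\pi(\theta_t)}[\sigma_{t-1}(x)],
\end{align*}
which is the claim.

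I do not expect any serious obstacle here; the argument is a routine expectation-wrapped version of the standard bound. The only point that deserves care is that the confidence inequality must hold uniformly over all of $D$ before taking expectations, since the expectation integrates against the full support of $\pi(\cdot \mid \theta)$; this uniformity is precisely why the hypothesis is stated for all $x \in D$ and why Lemma \ref{lemma:dist-bound} (a union bound over $D$) is the right tool upstream. The other mild subtlety is notational: the expectation in the final bound is taken under the \emph{selected} distribution $\theta_t$, so I would make sure the low-probability-mass regions of $\pi(\cdot \mid \theta_t)$ are handled by linearity rather than by any interchange that requires justification.
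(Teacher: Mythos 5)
Your proof is correct and follows essentially the same route as the paper's: bound $E_{\pi(\theta^*)}[f]$ above by $E_{\pi(\theta^*)}[\alpha_t]$ via the confidence hypothesis, pass to $E_{\pi(\theta_t)}[\alpha_t]$ via the selection rule \eqref{eq:thetat}, then use the lower confidence bound under $\pi(\theta_t)$ so the mean terms cancel and the two $\beta_t^{1/2}\sigma_{t-1}$ terms add. The paper writes the expectations as explicit sums over $D$, but the argument is identical.
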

     \begin{proof}
       The proof is similar to that of \citep{srinivas-2012-information-theoretic-regret} Lemma 5.2, adapted to the expectation over $\pi(x \vert \theta)$.
       First, from the assumed bounds, we have for all $\theta \in \Theta$:
     \begin{align*}
       \sum_{x \in D} [f(x) - \mu_{t-1}(x)] \pi(x|\theta)| \leq \sum_{x \in D} \beta^{1/2} \sigma_{t-1}(x) \pi(x | \theta) .
     \end{align*}
     and therefore
     \begin{align}
       \sum_{x \in D} f(x) \pi(x|\theta) \leq & \sum_{x \in D} [\mu_{t-1}(x) + \beta^{1/2} \sigma_{t-1}(x)] \pi(x | \theta) \\
				    \leq & \sum_{x \in D} \alpha_t(x) \pi(x|\theta). 
       \label{eq:marginal_ucb}
     \end{align}

       Then, by definition of $\theta_t$ (Eq. \ref{eq:thetat}) and the above bounds, we have
       \begin{align*}
	\sum_{x \in D} f(x) \pi(x \vert \theta^*) \leq & \sum_{x \in D} \alpha_t(x) \pi(x \vert \theta^*) \\
						  \leq & \sum_{x \in D} \alpha_t(x) \pi(x \vert \theta_t) = E_{\pi(x \vert \theta_t)} [\alpha_t(x)]
       \end{align*}

       Therefore, we have
\begin{align*}
	 r_\pi(\theta_t) = & \sum_{x \in D} f(x) \pi(x | \theta^*) - \sum_{x \in D} f(x) \pi(x | \theta_t) \\
		      \leq & \sum_{x \in D} \alpha(x) \pi(x | \theta^*) - \sum_{x \in D} f(x) \pi(x | \theta_t) \\
		      \leq & \sum_{x \in D} \alpha(x) \pi(x | \theta_t) - \sum_{x \in D} f(x) \pi(x | \theta_t) \\
		      \leq &\sum_{x \in D} [\beta^{1/2} \sigma_{t-1}(x) + \mu_{t-1}(x) - f(x)] \pi(x | \theta_t) \\
		      \leq & \sum_{x \in D} 2 \beta^{1/2} \sigma_{t-1}(x) \pi(x | \theta_t) \\
		      \leq & 2 \beta^{1/2} E_{\pi(x \vert \theta_t)}[\sigma_{t-1}(x)].
\end{align*}
     \end{proof}

\begin{lemma}
  Set $\delta \in (0, 1)$ and $\beta_t$ as above. Then the following holds with probability $1 - \delta$:
  \begin{align*}
	  \sum_{t=1}^T (r_\pi(\theta_t))^2 \leq C_1 \beta_T \gamma_T |D| \pi^*. 
  \end{align*}
\end{lemma}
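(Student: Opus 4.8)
The plan is to chain together the per-iteration regret bound of Lemma~\ref{lemma:discrete_regret_iter}, a Cauchy--Schwarz step that introduces the factors $|D|$ and $\pi^*$, and the information-gain bound of Lemma~\ref{lemma:sum-variance}. First I would condition on the event of Lemma~\ref{lemma:dist-bound}, namely that $|f(x) - \mu_{t-1}(x)| \leq \beta_t^{1/2}\sigma_{t-1}(x)$ holds for all $x \in D$ and all $t \geq 1$; this event has probability at least $1 - \delta$, so every deterministic inequality derived below inherits that probability. On this event Lemma~\ref{lemma:discrete_regret_iter} gives $r_\pi(\theta_t) \leq 2\beta_t^{1/2} E_{\pi(x\vert\theta_t)}[\sigma_{t-1}(x)]$, and squaring both sides yields
\begin{align*}
(r_\pi(\theta_t))^2 \leq 4\beta_t \Big(\sum_{x \in D} \sigma_{t-1}(x)\,\pi(x\vert\theta_t)\Big)^2.
\end{align*}

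The key step is to bound this squared expectation. I would apply Cauchy--Schwarz to the sum $\sum_{x\in D}\sigma_{t-1}(x)\,\pi(x\vert\theta_t)$, treating $\sigma_{t-1}(x)$ and $\pi(x\vert\theta_t)$ as the two nonnegative vectors, which gives
\begin{align*}
\Big(\sum_{x \in D} \sigma_{t-1}(x)\,\pi(x\vert\theta_t)\Big)^2 \leq \Big(\sum_{x \in D}\sigma^2_{t-1}(x)\Big)\Big(\sum_{x\in D}\pi^2(x\vert\theta_t)\Big).
\end{align*}
The second factor is exactly $E_{\pi(\theta_t)}[\pi(\theta_t)]$, which Eq.~\ref{eq:pistar-discrete} bounds by $\pi^*$. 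Hence $(r_\pi(\theta_t))^2 \leq 4\beta_t\,\pi^* \sum_{x \in D}\sigma^2_{t-1}(x)$.

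Finally I would sum over $t$ and exchange the order of summation:
\begin{align*}
\sum_{t=1}^T (r_\pi(\theta_t))^2 \leq \pi^* \sum_{x \in D}\Big(\sum_{t=1}^T 4\beta_t\,\sigma^2_{t-1}(x)\Big) \leq \pi^* \sum_{x\in D}\beta_T\gamma_T C_1 = C_1\beta_T\gamma_T|D|\pi^*,
\end{align*}
where the middle inequality applies Lemma~\ref{lemma:sum-variance} to each fixed $x \in D$ and the last equality counts the $|D|$ identical terms. The main obstacle is the Cauchy--Schwarz step: it is what simultaneously produces the $\pi^*$ factor (via the $\sum_x \pi^2$ term) and isolates the per-point variance sums $\sum_t \beta_t\sigma^2_{t-1}(x)$ into a form where Lemma~\ref{lemma:sum-variance} can be invoked pointwise. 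The resulting $|D|$ appears because that lemma must be paid once for each of the $|D|$ locations rather than only along the sampled trajectory. I would also double-check that Lemma~\ref{lemma:sum-variance} is genuinely valid at an arbitrary fixed $x$ rather than only at the selected points, since this pointwise application is what the bound ultimately rests on.
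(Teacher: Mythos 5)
Your proposal matches the paper's proof essentially step for step: the same application of Lemma~\ref{lemma:discrete_regret_iter}, the same Cauchy--Schwarz split of $\big(\sum_x \sigma_{t-1}(x)\pi(x\vert\theta_t)\big)^2$ into $\sum_x \sigma^2_{t-1}(x)$ times $\sum_x \pi^2(x\vert\theta_t) \leq \pi^*$, and the same exchange of summation order followed by a pointwise invocation of Lemma~\ref{lemma:sum-variance} at each of the $|D|$ locations. Your closing caveat about whether that lemma genuinely applies at a fixed arbitrary $x$ is well taken --- the paper makes exactly the same pointwise application without further justification --- but the argument itself is the one the paper gives.
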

     \begin{proof}
       From Lemma \ref{lemma:discrete_regret_iter}, we have:
\begin{align*}
\sum_{t=1}^T (r_\pi(\theta_t))^2 \leq & 4 \sum_{t=1}^T \beta_t E^2_{\pi(x \vert \theta_t)}[\sigma_{t-1}(x)] = 4 \sum_{t=1}^T \beta_t \Big[\sum_{x \in D} \sigma_{t-1}(x) \pi(x \vert \theta_t)\Big]^2 \\
				 \leq & 4 \sum_{t=1}^T \Big(\beta_t \sum_{x \in D} \sigma_{t-1}^2(x) \sum_{x \in D}\pi^2(x \vert \theta_t)\Big) \\
\end{align*}
       where the second step comes from the Cauchy-Schwartz inequality. Using Eq. \ref{eq:pistar-discrete}, we have 
       \begin{align*}
       4 \sum_{t=1}^T \Big(\beta_t \sum_{x \in D} \sigma_{t-1}^2(x) \sum_{x \in D}\pi^2(x \vert \theta_t)\Big) \leq \pi^*\Big(4 \sum_{t=1}^T \beta_t \sum_{x \in D} \sigma_{t-1}^2(x) \Big).
       \end{align*}
       Next we adapt the maximal information gain bound developed in \citep{srinivas-2012-information-theoretic-regret}.
       Specifically, we have
\begin{align*}
 \sum_{t=1}^T 4 \beta_t \sum_{x \in D} \sigma_{t-1}(x)^2 = & \sum_{x \in D} \sum_{t=1}^T 4 \beta_t \sigma_{t-1}^2(x)  \\
								\leq & \sum_{x \in D} \beta_T C_1 I(y_T; f_T) \\
								\leq & \sum_{x \in D} \beta_T C_1 \gamma_T \\
								\leq & \beta_T C_1 \gamma_T \vert D\vert.
\end{align*}
       The first two steps follow those of Lemma \ref{lemma:sum-variance}.
       Combining terms we get the bounds as described.
     \end{proof}

From here, we use the fact that \(R_T^2 \leq T \sum_{t=1}^T
     r_t^2\) from the Cauchy-Schwartz inequality to establish that
\(R_T \leq \sqrt{T C_1 \beta_T \gamma_T \vert D \vert \pi^*}\).

\subsection{Continuous distribution}
\label{sec:org783e731}
We now consider the case of closed, bounded \(D \subset
     R^d\). Specifically, we consider \(D = [0, r]^d\), \(d \in
     \mathbb{N}\) and \(r > 0\). The volume of this set is then \(V = r^d\).

\begin{theorem}
Let $D \subset [0, r]^d$ with $V = r^d$.
Suppose that $f$ is drawn from 
a GP with kernel $k$ that satisfies the probability bound, for some constants $a, b > 0$:
\begin{align*}
  Pr \Big( \text{sup}_{x \in D} \vert \partial f / \partial x_j \vert > L \Big) \leq a e^{-L^2/b^2} \quad j = 1, \dots, d.
\end{align*}
Pick $\delta \in (0, 1)$ and set
\begin{align*}
  \beta_t = 2 \log (t^2 2 \pi^2 / (3 \delta)) + 2 d \log\Big( t^2 dbr \sqrt{\log(4da/\delta)} \Big).
\end{align*}
Then
\begin{align*}
Pr\Big( R_T \leq \sqrt{T \beta_T \gamma_T C_1 V \pi^*} + \pi^2/3 \Big) \geq 1 - \delta.
\end{align*}
\end{theorem}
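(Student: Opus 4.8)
The plan is to transplant the discrete argument (Lemmas \ref{lemma:dist-bound}--\ref{lemma:discrete_regret_iter}) onto the continuum by replacing the fixed finite set $D$ with a sequence of increasingly fine discretizations and using the gradient assumption to pay for the resulting approximation. For each iteration $t$ I would fix a uniform grid $D_t \subset D$ of size $|D_t| = \tau_t^d$ chosen so that every $x \in D$ has a nearest grid point $[x]_t$ with $\|x - [x]_t\|_1 \le rd/\tau_t$. Taking $\tau_t = t^2 d b r \sqrt{\log(4da/\delta)}$ is precisely what makes the stated $\beta_t$ arise as $\beta_t = 2\log(|D_t|\,\pi_t/\delta)$ with $\pi_t \propto t^2$: the first summand of $\beta_t$ comes from the $\pi_t/\delta$ factor and the second, $2d\log\tau_t$, from $|D_t| = \tau_t^d$.

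Next I would set up two high-probability events and union-bound their failures to total at most $\delta$. First, applying the discrete confidence bound of Lemma \ref{lemma:dist-bound} to the finite set $D_t$ at every $t$ gives $|f(x) - \mu_{t-1}(x)| \le \beta_t^{1/2}\sigma_{t-1}(x)$ for all $x \in D_t$ and all $t$. Second, the gradient tail hypothesis, together with a union bound over the $d$ coordinates and the choice $L = b\sqrt{\log(4da/\delta)}$, makes $f$ Lipschitz with constant $L$ on $D$, so that $|f(x) - f([x]_t)| \le L\,rd/\tau_t = 1/t^2$. These two ingredients are exactly what the discrete case did not require, and the smoothness of $f$ (and hence of the GP posterior quantities $\mu_{t-1},\sigma_{t-1}$) lets every transfer between a grid point and its continuous neighbour cost only $O(1/t^2)$.

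The core step is a discretized analogue of Lemma \ref{lemma:discrete_regret_iter}. Working from $r_\pi(\theta_t) = E_{\pi(\theta^*)}[f(x)] - E_{\pi(\theta_t)}[f(x)]$ (Eq. \ref{eq:rpi_theta}), I would (i) pass from $f(x)$ to $f([x]_t)$ at cost $1/t^2$, (ii) apply the grid confidence bound to write $f([x]_t) \le \alpha_t([x]_t)$ and invoke the optimality of $\theta_t$ (Eq. \ref{eq:thetat}) to swap $\theta^*$ for $\theta_t$, and (iii) use the grid confidence bound again on the $\theta_t$ side and pass back from $f([x]_t)$ to $f(x)$ at a second cost $1/t^2$. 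This yields the per-iteration bound $r_\pi(\theta_t) \le 2\beta_t^{1/2}\,E_{\pi(\theta_t)}[\sigma_{t-1}(x)] + 2/t^2$. The key structural difference from \citet{srinivas-2012-information-theoretic-regret} is that there is no single selected point to which a full-domain confidence bound applies; because the observation is drawn from $\pi(\theta_t)$, the discretization error must be incurred on both the $\theta^*$ and the $\theta_t$ side, which doubles the tail and produces the additive $\pi^2/3$, since $\sum_{t\ge 1} 2/t^2 = \pi^2/3$.

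Finally I would sum over $t$, splitting $R_T$ into the discretization tail $\sum_t 2/t^2 \le \pi^2/3$ and the variance sum. Cauchy--Schwarz over $t$ gives $\sum_t 2\beta_t^{1/2} E_{\pi(\theta_t)}[\sigma_{t-1}] \le \sqrt{T \sum_t 4\beta_t E_{\pi(\theta_t)}^2[\sigma_{t-1}]}$; a second Cauchy--Schwarz in $x$ together with the continuous density bound (Eq. \ref{eq:pistar-continuous}) converts $E_{\pi(\theta_t)}^2[\sigma_{t-1}]$ into $\pi^* \int_D \sigma_{t-1}^2(x)\,dx$; and integrating the information-gain bound of Lemma \ref{lemma:sum-variance} over $D$ supplies the volume factor $V$, so that $\sum_t 4\beta_t \int_D \sigma_{t-1}^2 \le C_1 \beta_T \gamma_T V$. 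Combining on the intersection of the two events gives $R_T \le \sqrt{T\beta_T\gamma_T C_1 V \pi^*} + \pi^2/3$. I expect the main obstacle to be the bookkeeping that keeps the discretization self-consistent: the acquisition is maximized over $\theta$ against the \emph{continuous} $\alpha_t$ while the confidence bound holds only on $D_t$, so the swap between $\theta^*$ and $\theta_t$ must be routed through the grid while the $O(1/t^2)$ transfer costs are controlled by the posterior smoothness, and the expected variance terms (living at the grid points $[x]_t$) must be reconciled with the per-point bound of Lemma \ref{lemma:sum-variance} so that integration over $D$ legitimately yields $V$ rather than an uncontrolled $|D_t|$. Letting $\tau_t$ grow like $t^2$ is what simultaneously keeps the Lipschitz tail summable and the grid fine enough for this reconciliation.
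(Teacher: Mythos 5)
Your proposal is correct and follows essentially the same route as the paper: a per-iteration discretization $D_t$ of size $\tau_t^d$ with $\tau_t \propto t^2$, a union bound combining the finite-set confidence bound on $D_t$ with the Lipschitz event from the gradient tail, a grid-routed analogue of Lemma \ref{lemma:discrete_regret_iter} giving $r_\pi(\theta_t) \le 2\beta_t^{1/2}\,E_{\pi(\theta_t)}[\sigma(\cdot)] + 2/t^2$, and the double Cauchy--Schwarz with Eq.~\ref{eq:pistar-continuous} and the integrated information-gain bound yielding $\sqrt{T\beta_T\gamma_T C_1 V\pi^*} + \pi^2/3$. The ``obstacle'' you flag at the end is resolved in the paper exactly as you anticipate: $\theta_t$ is redefined as the maximizer of $\int \alpha([x]_t)\,\pi(x\mid\theta)\,dx$, so the acquisition and the variance terms both live on the grid from the outset.
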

Proof to follow.

\begin{lemma}[Adapted from \citep{srinivas-2012-information-theoretic-regret} Lemma 5.5]
\end{lemma}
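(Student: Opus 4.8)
I infer that the missing statement is the continuous-domain analogue of Lemma \ref{lemma:dist-bound}: with the given $\beta_t$ it should guarantee, with probability $\ge 1-\delta$ and simultaneously for all $t\ge1$, a confidence bound of the form $|f(x) - \mu_{t-1}([x]_t)| \le \beta_t^{1/2}\sigma_{t-1}([x]_t) + 1/t^2$ holding for every $x$ in the continuous set $D$, where $[x]_t$ is the nearest point to $x$ in a finite grid $D_t$. The reason this is the right target is that, unlike the finite case, the SSBO regret $r_\pi(\theta_t)$ is an integral of $f(x) - \mu_{t-1}(x) \mp \beta_t^{1/2}\sigma_{t-1}(x)$ against the continuous density $\pi(x\vert\theta)$, so a bound valid only on a finite set does not suffice; the lemma must hold uniformly over $D$ so that it can later be integrated.

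First I would fix, for each $t$, a uniform grid $D_t \subset D$ of size $|D_t| = \tau_t^d$ whose spacing guarantees $\|x - [x]_t\|_1 \le rd/\tau_t$ for every $x \in D$. Applying Lemma \ref{lemma:dist-bound} to the finite set $D_t$ (with $\tau_t^d$ in place of $|D|$ and a reliability sequence $\pi_t$ with $\sum_{t\ge1}1/\pi_t$ convergent) yields, with probability $\ge 1-\delta/2$, the bound $|f(x) - \mu_{t-1}(x)| \le \beta_t^{1/2}\sigma_{t-1}(x)$ for all $x \in D_t$ and all $t\ge1$. Here the $\log|D_t| = d\log\tau_t$ part of the union bound becomes the $2d\log(\cdot)$ summand of $\beta_t$, while the time-and-reliability part becomes the $2\log(2\pi^2 t^2/(3\delta))$ summand.

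Next I would turn the pointwise gradient tail bound into a global Lipschitz statement: from $Pr(\sup_{x}|\partial f/\partial x_j| > L) \le a e^{-L^2/b^2}$ and a union bound over the $d$ coordinates, with probability $\ge 1-\delta/2$ one may take $L = b\sqrt{\log(4da/\delta)}$, so that $|f(x) - f([x]_t)| \le L\,rd/\tau_t$ for every $x \in D$. Choosing $\tau_t = t^2 dbr\sqrt{\log(4da/\delta)}$ drives this discretization error below $1/t^2$ and fixes $\tau_t$ inside the $2d\log(\cdot)$ term of $\beta_t$. A union bound over the two failure events gives total probability $\ge 1-\delta$, and adding the grid confidence bound at $[x]_t$ to the discretization error gives the claimed inequality.

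I expect the main difficulty to be twofold. The routine-but-delicate part is the constant bookkeeping: the $\tau_t^d$ factor inside $\log|D_t|$ must cancel against the chosen $\tau_t$ so that $\beta_t$ comes out exactly as stated, and the two $\delta/2$ allocations must combine to the advertised $\delta$. The more substantive, SSBO-specific part is that this lemma only becomes useful once integrated against $\pi(x\vert\theta)$; since the grid map $x \mapsto [x]_t$ appears inside the bound, taking $E_{\pi(\theta)}$ replaces $\sigma_{t-1}(x)$ by $\sigma_{t-1}([x]_t)$ and leaves an additive $1/t^2$ (which integrates to $1/t^2$ because $\pi$ is a density), and one must verify these terms propagate cleanly into the $\sqrt{T\beta_T\gamma_T C_1 V\pi^*} + \pi^2/3$ bound of the theorem rather than accumulating extra grid-dependent error.
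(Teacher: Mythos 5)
You have reconstructed and proved the wrong statement. The lemma tagged ``Adapted from Lemma 5.5'' is not the uniform, discretized confidence bound over all of $D$ --- that is the paper's subsequent lemma, adapted from Srinivas et al.\ Lemma 5.7, which the paper states and proves separately (and which your write-up essentially reproduces: the grid $D_t$ with $\|x-[x]_t\|_1 \le rd/\tau_t$, the Lipschitz tail bound, the choice $\tau_t = dt^2br\sqrt{\log(2da/\delta)}$, the $1/t^2$ discretization error, and the two $\delta/2$ allocations). The statement actually in question here is the much simpler per-sample bound: for the point $x_t \sim \pi(x\vert\theta_t)$ actually drawn at step $t$, with $\beta_t = 2\log(\pi_t/\delta)$, $\sum_{t\ge1}\pi_t^{-1}=1$, one has $\vert f(x_t) - \mu_{t-1}(x_t)\vert \le \beta_t^{1/2}\sigma_{t-1}(x_t)$ for all $t\ge1$ with probability at least $1-\delta$. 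Note the absence of any $\vert D\vert$, $\vert D_t\vert$, or dimension factor in $\beta_t$ and the absence of a $1/t^2$ correction: no union bound over the domain is taken because the bound is only required at the single realized sample $x_t$.

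The proof of that statement is a one-step Gaussian tail argument, not a discretization argument: conditioned on the history $(x_{1:t-1}, y_{1:t-1})$ and on the realized $x_t$, $f(x_t)$ is normally distributed with mean $\mu_{t-1}(x_t)$ and variance $\sigma_{t-1}^2(x_t)$ (the only SSBO-specific point to verify is that drawing $x_t$ from $\pi(\cdot\vert\theta_t)$, where $\theta_t$ is determined by the posterior at $t-1$, does not disturb this conditional Gaussianity, since the sampling randomness is independent of $f$). Then $\Pr\bigl(\vert f(x_t)-\mu_{t-1}(x_t)\vert > \beta_t^{1/2}\sigma_{t-1}(x_t)\bigr) \le e^{-\beta_t/2} = \delta/\pi_t$, and a union bound over $t$ gives total failure probability $\delta$. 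Your reconstruction of the Lemma 5.7 analogue is itself essentially sound and matches the paper's own later argument, but as an answer to this lemma it both proves the wrong claim and imports assumptions (the gradient tail bound, the grid) that this lemma does not need.
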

Let \(x_t\) be the point sampled at step \(t\), given by \(x_t \sim
     \pi(x \vert \theta_t)\). Choose \(\delta \in (0, 1)\) and \(\beta_t
     = 2 \log(\pi_t / \delta)\), with \(\sum_{t \geq 1} \pi^{-1}_t = 1\) and
\(\pi_t > 0\). Then,
\begin{align*}
\vert f(x_t) - \mu_{t-1}(x_t) \vert \leq \beta_{t-1}^{1/2} \sigma_t(x_t) \quad \forall t \geq 1
\end{align*}
holds with probability \(1 - \delta\).
\begin{proof}
see \citep{srinivas-2012-information-theoretic-regret} lemma 5.5. 
\end{proof}

For sake of analysis, define a discretization of \(D\): \(D_t
     \subset D\), \(\vert D_t \vert < \infty\).
\begin{lemma}[Adapted from \citep{srinivas-2012-information-theoretic-regret} Lemma 5.6]
\label{lemma:dt-max-dist}
Pick $\delta \in (0, 1)$ and set $\beta_t = 2 log (\vert D_t \vert \pi_t / \delta)$, with $\sum 1 / \pi_t = 1$ and $\pi_t > 0$.
Then,
\begin{align*}
Pr \Big(\vert f(x) - \mu_{t-1}(x) \vert \leq \beta_t \sigma_{t-1}(x) \quad \forall x \in D_t, \forall t \geq 1 \Big) \geq 1 - \delta.
\end{align*}
\end{lemma}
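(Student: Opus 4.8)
The plan is to follow the structure of \citep{srinivas-2012-information-theoretic-regret} Lemma 5.6 essentially verbatim, since the only departure from the standard GP-UCB setting is that the sample locations $x_t$ are now drawn stochastically from $\pi(x \vert \theta_t)$ rather than selected deterministically. The argument rests on a single-point Gaussian tail bound combined with a two-fold union bound: first over the finitely many points of the discretization $D_t$, then over all iterations $t \geq 1$. The crucial structural fact I would exploit is that $D_t$ is a fixed, predetermined discretization of $D$ (e.g.\ a uniform grid), so its points are deterministic and independent of the random draws $x_1, \dots, x_{t-1}$.

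First I would reduce to a single fixed point $x \in D_t$ at a fixed time $t$. Conditioning on the history $Y_{t-1}$, the GP posterior makes $f(x)$ Gaussian with mean $\mu_{t-1}(x)$ and variance $\sigma_{t-1}^2(x)$, so $(f(x) - \mu_{t-1}(x))/\sigma_{t-1}(x)$ is a standard normal variate. Applying the Gaussian tail bound $\Pr(\vert z \vert > c) \leq e^{-c^2/2}$ with $c = \beta_t^{1/2}$ gives
\begin{align*}
\Pr\Big( \vert f(x) - \mu_{t-1}(x) \vert > \beta_t^{1/2} \sigma_{t-1}(x) \Big) \leq e^{-\beta_t/2},
\end{align*}
and with $\beta_t = 2\log(\vert D_t \vert \pi_t / \delta)$ the right-hand side equals exactly $\delta/(\vert D_t \vert \pi_t)$.

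Next I would assemble the two union bounds. Summing the per-point failure probability over the $\vert D_t \vert$ points of $D_t$ cancels the $\vert D_t \vert$ factor and yields a per-iteration failure probability of $\delta/\pi_t$. Summing over $t \geq 1$ and invoking the assumption $\sum_{t \geq 1} \pi_t^{-1} = 1$ gives a total failure probability of at most $\delta \sum_{t \geq 1} \pi_t^{-1} = \delta$, so the claimed inequality holds for all $x \in D_t$ and all $t \geq 1$ simultaneously with probability $\geq 1 - \delta$. The canonical choice $\pi_t = \pi^2 t^2 / 6$ satisfies the summability constraint.

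The main subtlety, and the only place the stochastic-sampling setting must genuinely be checked, is that $\sigma_{t-1}(x)$ and $\mu_{t-1}(x)$ are themselves functions of the random sample locations $x_1, \dots, x_{t-1} \sim \pi(\theta)$, so the Gaussian tail step must be read conditionally on the history. The hard part is therefore conceptual rather than computational: I would argue that because $D_t$ is a predetermined grid whose points do not depend on the draws, for every realization of the history the conditional law of $f(x) - \mu_{t-1}(x)$ at a fixed $x \in D_t$ is exactly $\mathcal{N}(0, \sigma_{t-1}^2(x))$. Hence the pointwise tail bound holds conditionally; taking expectations over the history via the tower property preserves it, and the two union bounds then go through unchanged.
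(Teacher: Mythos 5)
Your proposal is correct and takes essentially the same route as the paper, which simply invokes the standard GP-UCB argument (a Gaussian tail bound at each fixed grid point conditional on the history, followed by a union bound over the $\vert D_t \vert$ points and over $t$ using $\sum_t \pi_t^{-1} = 1$) with the finite set $D$ replaced by the predetermined discretization $D_t$; your explicit treatment of the conditioning on the random sample locations is exactly the right justification for why the argument is unaffected by stochastic sampling. The only discrepancy is that the paper's statement writes $\beta_t \sigma_{t-1}(x)$ where the bound you prove (and the one intended, matching the original Lemma 5.6) is $\beta_t^{1/2}\sigma_{t-1}(x)$.
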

\begin{proof}
Same as Lemma \ref{lemma:dist-bound}, replacing finite $D$ with $D_t$.
\end{proof}

Now assume that
\begin{align*}
Pr\big(\forall j, \forall x \in D, \vert \partial f /  \partial x_j \vert < L \big) \geq 1 - da \text{exp}(-L^2/b^2).
\end{align*}
From which follows
\begin{align}
\label{eq:fx-diff-bound}
Pr\Big(\forall x, x' \in D, \vert f(x) - f(x') \vert < L \vert\vert x - x'\vert\vert_1 \Big) \geq 1 - da \text{exp}(-L^2/b^2).
\end{align}

Next, set the size of \(D_t\) to \(\tau_t^d\) such that for all \(x
     \in D\),
\begin{align*}
\vert \vert x - [x]_t\vert \vert_1 \leq r d / \tau_t,
\end{align*}
where \([x]_t\) corresponds to the closest point in \(D_t\) to \(x\).
This can be accomplished by placing \(\tau_t\) equally spaced coordinates
in each dimension.

\begin{lemma}[Adapted from \citep{srinivas-2012-information-theoretic-regret} Lemma 5.7]
Pick $\delta \in (0, 1)$ and set $\beta_t = 2 \log(2 \pi_t / \delta) + 4d\log(dtbr \sqrt{\log(2da/\delta)})$
where $\sum \pi_t^{-1} = 1$ and $\pi_t > 0$. Let $\tau_t = d t^2 b r \sqrt{\log(2 d a / \delta)}$. Then
\begin{align}
Pr\big( \vert f(x) - \mu_t([x]_t) \vert \leq \beta_t^{1/2} \sigma_t([x]_t) + \frac{1}{t^2}; \forall t, \forall x \in D \big) > 1 - \delta.
\end{align}
\end{lemma}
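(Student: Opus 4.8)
The plan is to split the error $|f(x) - \mu_t([x]_t)|$ through the discretization point $[x]_t$ and bound the two resulting pieces with probability at least $1 - \delta/2$ each, then combine by a union bound. Writing
\[
|f(x) - \mu_t([x]_t)| \leq |f(x) - f([x]_t)| + |f([x]_t) - \mu_t([x]_t)|,
\]
the first term is a purely geometric discretization error, while the second is the GP confidence bound evaluated at a point of the finite grid $D_t$, which is exactly the object controlled by Lemma~\ref{lemma:dt-max-dist}.

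For the first term I would invoke the Lipschitz-type bound in Eq.~\ref{eq:fx-diff-bound}. Setting $L = b\sqrt{\log(2da/\delta)}$ makes the failure probability $da\,\exp(-L^2/b^2) = \delta/2$, so that $|f(x) - f(x')| \leq b\sqrt{\log(2da/\delta)}\,\|x-x'\|_1$ holds for all $x,x' \in D$ with probability at least $1 - \delta/2$. Combining this with the grid resolution $\|x - [x]_t\|_1 \leq rd/\tau_t$ and the stated choice $\tau_t = dt^2 br\sqrt{\log(2da/\delta)}$, the Lipschitz constant and $\tau_t$ cancel and leave $|f(x) - f([x]_t)| \leq 1/t^2$ uniformly in $x \in D$ and $t$ (taking $\lceil \tau_t \rceil$ grid points per dimension if an integer count is needed, which only decreases the error). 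For the second term I would apply Lemma~\ref{lemma:dt-max-dist} to the finite sets $D_t$ with confidence parameter $\delta/2$, which controls $|f(x) - \mu_t(x)|$ simultaneously for all $x \in D_t$ and all $t$ with probability at least $1 - \delta/2$; in particular it holds at $[x]_t \in D_t$. A union bound over these two events then yields the claim with probability greater than $1 - \delta$.

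The main bookkeeping step, and the one I expect to be the real obstacle, is checking that substituting $|D_t| = \tau_t^d$ into the $\beta_t = 2\log(|D_t|\pi_t/(\delta/2))$ produced by Lemma~\ref{lemma:dt-max-dist} reproduces the $\beta_t$ in the statement. Expanding gives $\beta_t = 2\log(2\pi_t/\delta) + 2d\log\!\big(dt^2 br\sqrt{\log(2da/\delta)}\big)$, whereas the stated form carries $4d\log\!\big(dtbr\sqrt{\log(2da/\delta)}\big)$, i.e.\ it replaces the factor $t^2$ inside the logarithm by a full square. This is justified because
\[
dt^2 br\sqrt{\log(2da/\delta)} \leq \big(dtbr\sqrt{\log(2da/\delta)}\big)^2
\]
whenever $dbr\sqrt{\log(2da/\delta)} \geq 1$, so the stated $\beta_t$ only over-estimates the exact value; since enlarging $\beta_t$ preserves the confidence inequality $|f - \mu_t| \leq \beta_t^{1/2}\sigma_t$, the bound remains valid with the simplified expression. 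I would close by noting that the index on $\mu$ and $\sigma$ follows whichever convention Lemma~\ref{lemma:dt-max-dist} is stated in, exactly as in the corresponding discretization step of \citep{srinivas-2012-information-theoretic-regret}.
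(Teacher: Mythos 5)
Your proposal follows essentially the same route as the paper's proof: bound the discretization error $|f(x)-f([x]_t)|$ by $1/t^2$ via the Lipschitz event at level $\delta/2$ and the choice of $\tau_t$, apply Lemma~\ref{lemma:dt-max-dist} on $D_t$ at level $\delta/2$, and combine by a union bound. Your explicit check that the stated $\beta_t$ with $4d\log(dtbr\sqrt{\log(2da/\delta)})$ dominates the exact $2\log(|D_t|\pi_t/(\delta/2))$ is a correct filling-in of a bookkeeping step the paper leaves implicit ("the result follows as described").
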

\begin{proof}
From Eq. \ref{eq:fx-diff-bound} we have
\begin{align*}
Pr\big( \forall x, x' \in D \vert f(x) - f(x') \vert \leq b \sqrt{\log(2da/\delta)} \vert \vert x - x' \vert \vert_1\big) \geq 1 - \delta/2.
\end{align*}
Then,
\begin{align*}
Pr\big( \vert f(x) - f([x]_t) \vert \leq \frac{r d}{\tau_t} b \sqrt{\log(2da/\delta)} \big) \geq 1 - \delta/2.
\end{align*}
Therefore, with the selected value of $\tau_t$, we have
\begin{align*}
Pr\big( \vert f(x) - f([x]_t) \vert \leq \frac{1}{t^2}; \forall x \in D \big) \geq 1 - \delta/2.
\end{align*}

With the specified value of $\tau_t$, we have $\vert D_t \vert = \Big(d t^2 b r \sqrt{\log(2 d a / \delta)} \Big)^d$.
Then, with probability $\delta/2$ substituted into Lemma \ref{lemma:dt-max-dist}, the result follows as described.
\end{proof}

In order to minimize the expected regret, we now adapt the UCB
calculation to marginalize within the set \(D_t\).
Specifically, find \(\theta_t\) as
\begin{align*}
\theta_t := \argmax_{\theta \in \Theta} \int \alpha([x]_t) \pi(x \vert \theta) dx.
\end{align*}
First, from this definition of \(\theta_t\) and the previous lemma
we have with probability \(\geq 1-\delta\),
\begin{align*}
\int f(x) \pi(x \vert \theta^*) dx \leq & \int \Big(\mu_t([x]_t) + \beta_t^{1/2} \sigma_t([x]_t) + \frac{1}{t^2}\Big) \pi(x \vert \theta^*) dx \\
				   \leq & \int \Big(\alpha([x]_t) + \frac{1}{t^2}\Big) \pi(x \vert \theta^*) dx \\
				   \leq & \int \Big(\alpha([x]_t) + \frac{1}{t^2}\Big) \pi(x \vert \theta_t) dx \\
				   \leq & \int \Big(\mu_t([x]_t) + \beta_t^{1/2} \sigma_t([x]_t) + \frac{1}{t^2}\Big) \pi(x \vert \theta_t) dx.
\end{align*}
Then, we have
\begin{align*}
r_\pi(\theta_t) = & \int f(x) \pi(x \vert \theta^*) dx - \int f(x) \pi(x \vert \theta_t) dx \\
		\leq & \int \Big( \beta_t^{1/2} \sigma_t([x]_t) + \mu_t([x]_t) - f(x) + \frac{1}{t^2} \Big) \pi(x \vert \theta_t) dx \\
		\leq & \int \Big( 2\beta_t^{1/2} \sigma_t([x]_t) + 2\frac{1}{t^2} \Big) \pi(x \vert \theta_t) dx \\
		\leq & \frac{2}{t^2} + \int 2\beta_t^{1/2} \sigma_t([x]_t) \pi(x \vert \theta_t) dx
\end{align*}

Then, bound the second term similar to lemma \ref{lemma:discrete_regret_iter} for the
discrete case using the Cauchy-Schwartz inequality.
First, we have
\begin{align}
\sum_{i=1}^T \Big(\int (2 \beta^{1/2} \sigma_t([x]_t)) \pi(x \vert \theta_t) dx \Big)^2 
     	   \leq & \sum_{i=1}^T \int (4 \beta \sigma^2_t([x]_t)) dx \int \pi^2(x \vert \theta_t) dx  & \quad \text{via Cauchy-Schwartz} \\
	   \leq & \pi^* \sum_{i=1}^T \int (4 \beta \sigma^2_t([x]_t)) dx & \quad \text{via Eq. \ref{eq:pistar-continuous}} \\
	   \leq & \pi^* \int \sum_{i=1}^T (4 \beta \sigma^2_t([x]_t)) dx & \\
	   \leq & \pi^* V \beta_T \gamma_T C_1& 
	   \label{eq:continuous-sum-int-bound}
\end{align}
Returning to the expected regret, we have
\begin{align*}
   R_T = \sum_{t=1}^T r_\pi(\theta_t) = & \sum_{i=1}^T \Big(\int (2 \beta^{1/2} \sigma_t([x]_t)) \pi(x \vert \theta_t) dx \Big) + \pi^2/3 & \text{because } \sum 1 / t^2 = \pi^2 / 6 \\
	      		  \leq & \sqrt{T \beta_T \gamma_T C_1 V \pi^*} + \pi^2/3 & \text{via Eq. \ref{eq:continuous-sum-int-bound} and Cauchy-Schwartz}
\end{align*}

\begin{figure}[btp]
\centering
\includegraphics[width=\textwidth]{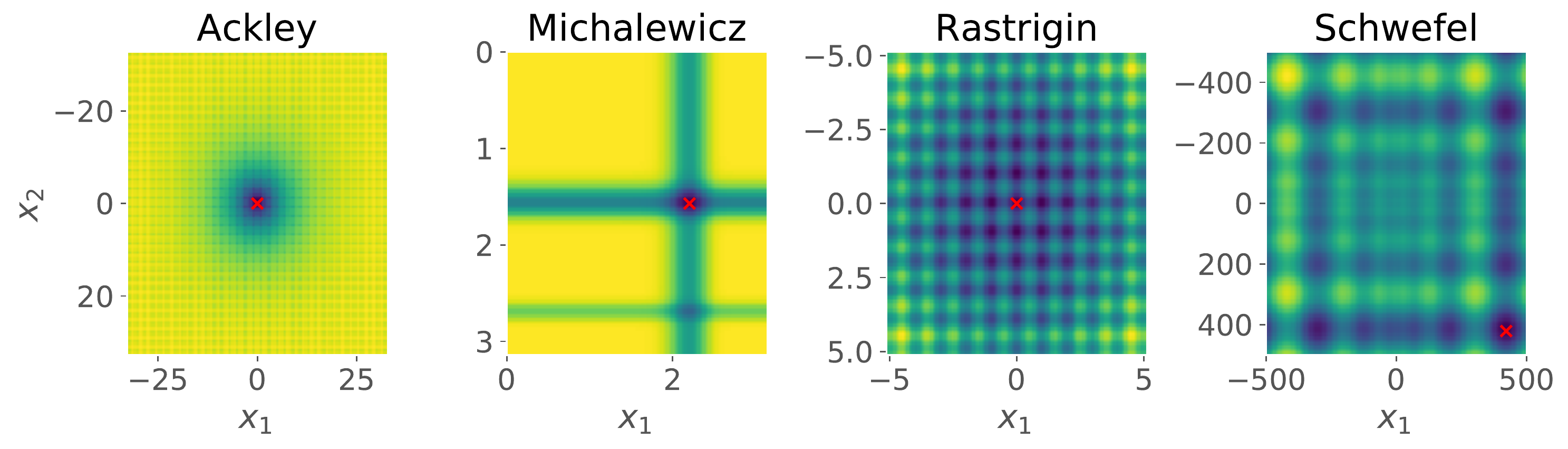}
\caption{\label{fig:orgc22416b}
\textbf{Objective functions used for simulations.} 2-dimensional views of the objective functions used in this study. These functions were taken from standard bench-marking literature on optimization processes. These objectives have various useful properties for assessing optimization techniques, including local optima, multiple periodicities and magnitudes, and sharp boundaries.}
\end{figure} 

\begin{table}[tb]
\caption{\label{tab:org07a86ca}
Objective functions. \(d=2\) in all cases.}
\centering
\begin{tabular}{l|l|c}
Function & Equation & range\\
\hline
Ackley & \(-20 \text{exp} \Big(-0.2 \sqrt{\frac{1}{d} \sum_{i=1}^d x_i^2 }\Big) - \text{exp}\big(\frac{1}{d} \sum_{i=1}^d \text{cos}(2\pi x_i)\big) + 20 + \text{exp}(1)\) & \(-32.768 \leq x \leq 32.768\)\\
Michaelwicz & \(-\sum_{i=1}^d \text{sin}(x_i) \text{sin}^{2m} \big( \frac{i x_i^2}{\pi} \big)\) & \(0 \leq x \leq \pi\)\\
Rastrigin & \(10d + \sum_{i=1}^d \big( x_i^2 - 10 \text{cos}(2\pi x_i) \big)\) & \(-5.12 \leq x \leq 5.12\)\\
Schwefel & \(418.9829 d - \sum_{i=1}^d x_i \text{sin} (\sqrt{\vert x_i \vert})\) & \(-500 \leq x \leq 500\)\\
\end{tabular}
\end{table}

\begin{figure}[btp]
\centering
\includegraphics[width=\textwidth]{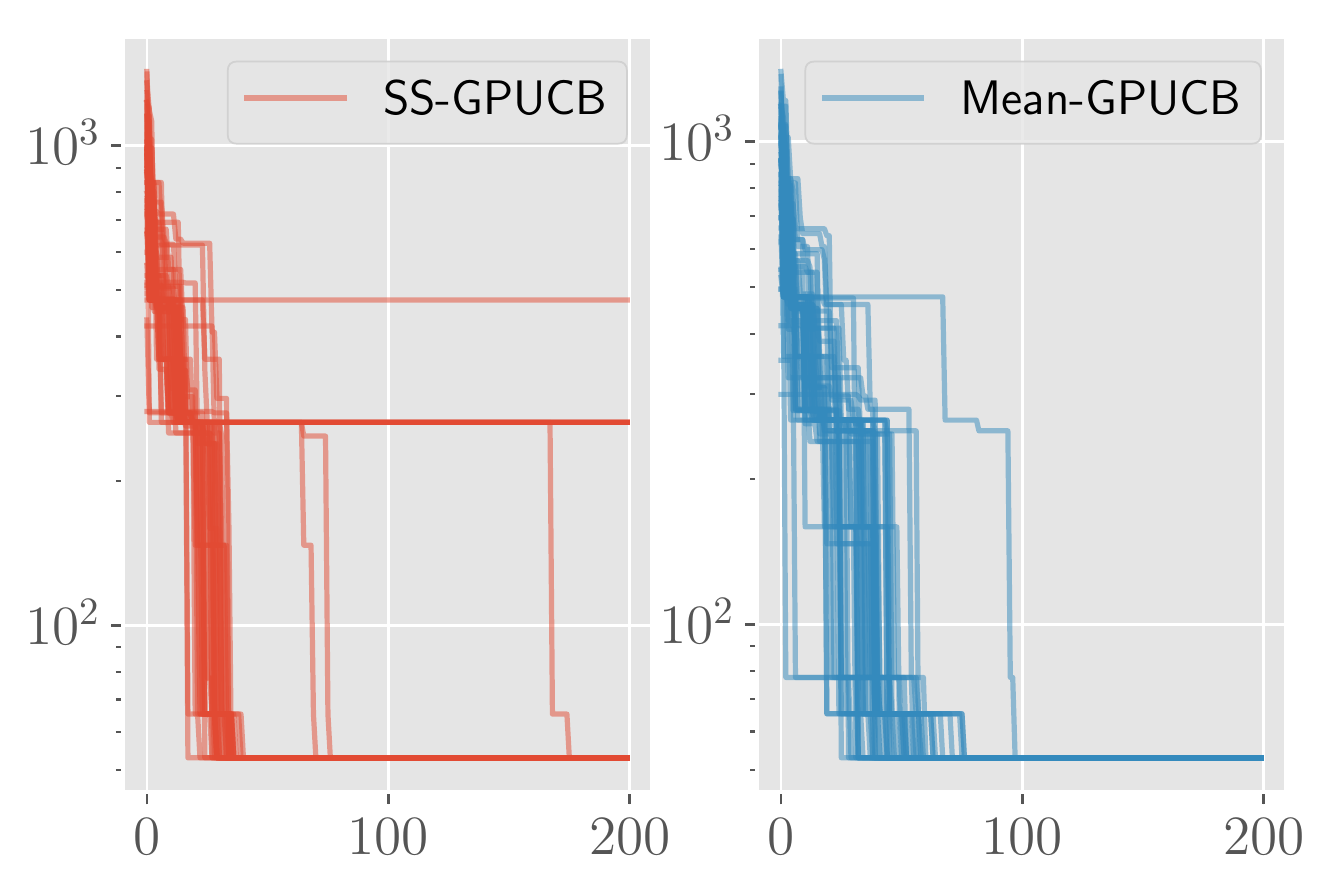}
\caption{\label{fig:org181cb08}
\textbf{Simple regret trace plots for Schwefel function.} Simple regret trace plots for sequential stochastic optimization for SS-GPUCB and Mean-GPUCB}
\end{figure} 

\subsection{Exact marginalization of acquisition function}
\label{sec:org8bc60d9}
As in \citet{desautels-2014-parallelizing-exploration-exploitation},
define a feedback function \(\text{fb}[t]\) specifying the
number of observations available at iteration \(t\). For
example, if each batch is of fixed size \(B\), then \(\text{fb}[t] =
     \lfloor (t-1)/B \rfloor B\). Normal sequential feedback
corresponds to \(\text{fb}[t] = t - 1\).
Define
\begin{align}
   \hat{\alpha}_t(x) = \mu_{\text{fb}[t]}(x) + \beta^{1/2} \sigma_{t-1}(x),
\end{align}
be the approximation of \(\alpha_t(x)\) as suggested in
\citep{desautels-2014-parallelizing-exploration-exploitation}. The
predictive mean is computed from the available feedback at a
given iteration \(t\) (e.g. \(\text{fb}[t]\) data points) while
\(\sigma_t(x)\) can be calculated exactly because it does not
depend on the observations. 

At time \(t\), define the expected UCB for a batch of size \(B\) as
\begin{align}
\label{eq:batch-expected-regret}
E_{\pi(\theta)} \Bigg[\sum_{i=1}^B \alpha_{t+i}(x)\Bigg] = & \sum_{i=1}^B E_{\pi(\theta)}[\alpha_{t+i}(x) \vert x_{1:t+i-1}] \\
     		     				     \approx & \sum_{i=1}^B E_{\pi(\theta)}[\hat{\alpha}_{t+i}(x)\vert x_{1:t+i-1}],
\end{align}
where \(x_{1:t+i-1} = \{x_1, \dots x_{t+i-1}\}\) are the location
of observations \(1\) to \(t+i-1\). The issue with this calculation
arises from the term \(\sigma_{t-1}(x)\) in \(\hat
     \alpha_t(x)\). This term depends on previous observations in the
batch \(x_{t:t+i-1}\) and therefore requires an expectation
calculation over the inverse of the kernel matrix including these
terms. For exact calculation, this requires a combinatorial
calculation over each possible \(x_i\), \(t \leq i \leq t + i -
     1\). Alternatively, Monte-carlo methods can be employed
\citep{snoek-2012-practical-bayesian-optimization}.

\subsection{Batch penalty calculation}
\label{sec:orgd4314a0}
We use the local penalty calculation of
\citep{gonzalez-2015-batch-bayesian-optimization}, which is
constructed with the assumption of Lipschitz continuity on the
optimization target \(f\). Specifically there is some \(L\) such that
\begin{align}
  \vert f(x_1) - f(x_2) \vert \leq L \vert \vert x_1 - x_2 \vert \vert, \quad \forall x_1, x_2 \in D.
\end{align}
This constant is estimated from the current model as
\begin{align}
  \hat{L} = \max_{x \in D, i \in k} \vert \vert \frac{d}{d x^{(i)}} \mu(x)\vert \vert,
\end{align}
where \(\frac{d}{d x^{(i)}}\) is the derivative with respect to
input dimension \(i\) as is directly calculated from the GP model
\citep{solak-2003-derivative-gaussian}. This term is then combined
with \(\hat{M}\), the approximate maximum of the function
calculated from all current observations, the local penalty
function \(\varphi(x_j; x_k)\) for previously selected observation position \(x_k\) is
\begin{align}
  \varphi(x_j; x_k) = \frac{1}{2} erfc\Bigg(- \frac{\hat{L} \vert \vert x_j - x_k \vert \vert -\hat{M} + \mu_n(x_j)}{\sqrt{2 \sigma^2_n(x_j)}} \Bigg),
\end{align}
where \(erfc\) is the complementary error function, and
\(\mu_n(\cdot)\) and \(\sigma_n^2(\cdot)\) are the predictive mean and
variance respectively of the GP model at iteration \(n\).

\begin{figure}[btp]
\centering
\includegraphics[width=\textwidth]{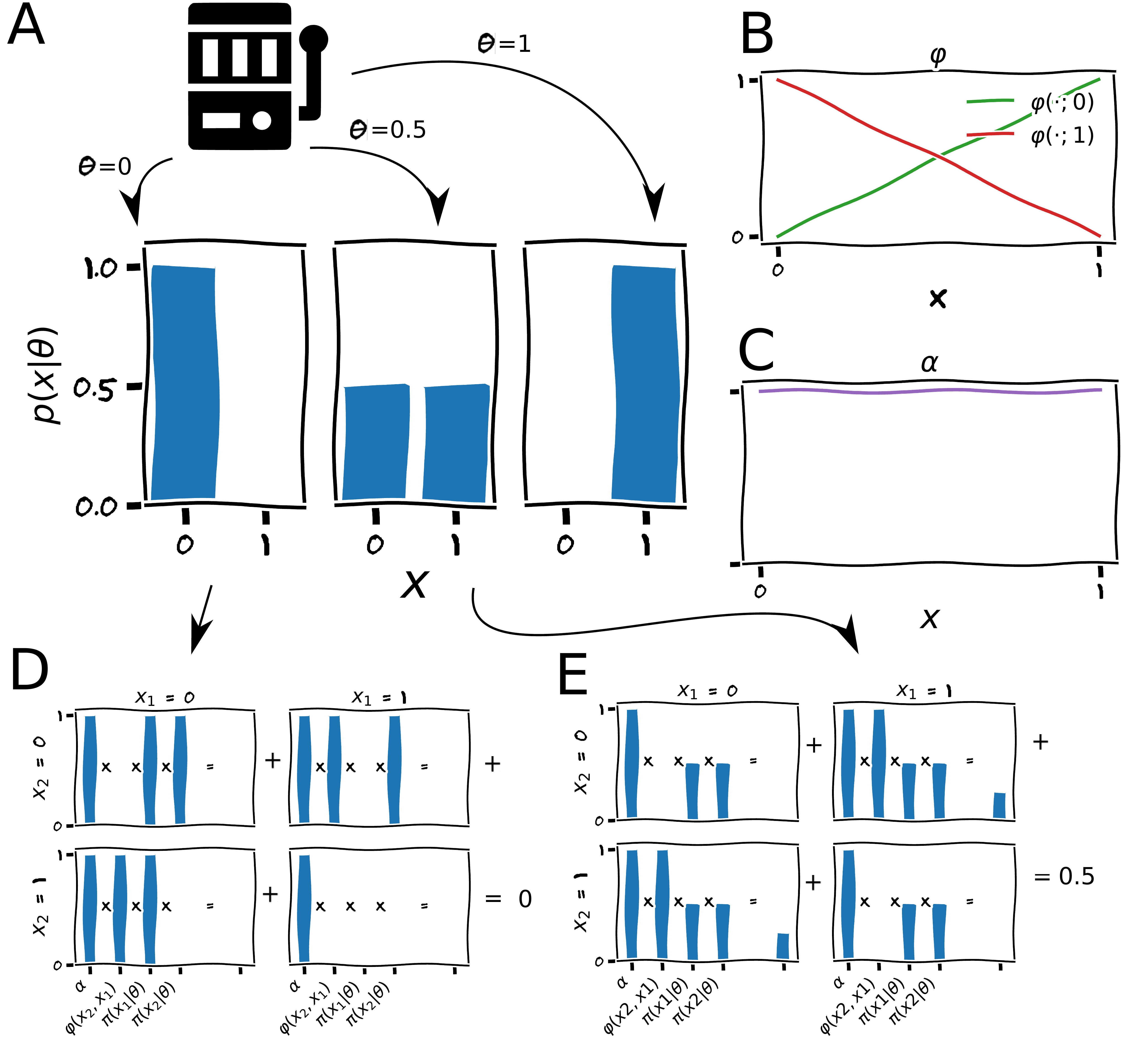}
\caption{\label{fig:org70f40a0}
\textbf{Simplified example of batch penalty usage.}. A simple three armed bandit (A) with parameter space \(\Theta = \{0, .5, 1\}\). We consider the acquisition value for each possible \(\theta\) when sampling two values \(x_1\) and \(x_2\). We simplify the acquisition penalty function \(\varphi\) to be \(1\) when the two \(x\) values differ and \(0\), otherwise (B). We also assume no previous data exists for this model, so the acquisition for both outcomes of \(x\) are equal (C). The total acquisition value for a specific \(\theta\) is the expected acquisition of both \(x_1\) and \(x_2\): \(E_{\pi(\theta)}\Big[ \alpha(x_1) + \alpha(x_2) \varphi(x_2; x_1) \Big]\). The first term simplifies to \(1\) for each value of \(\theta\) but the second term must be evaluated for each pair of \(x_1\) and \(x_2\) (D,E). For \(\theta=0\), there is no non-zero term in this portion of the expectation (D). For \(\theta=0.5\) however, the case where \(x_1\) and \(x_2\) differ occurs 50 \% of the time, leading to a total acquisition value of 0.5 from the second sample in the batch (E). In total, the acquisition for \(\theta=0\) (or \(=1\)) is \(1\) while for \(\theta=0.5\) it is \(1.5\). Therefore, applying the penalty \(\varphi\) allows the algorithm to properly weight the overlap in information between repeated sampling from the same distribution \(\pi(\theta)\).}
\end{figure}

\begin{figure}[tbp]
\centering
\includegraphics[width=\textwidth]{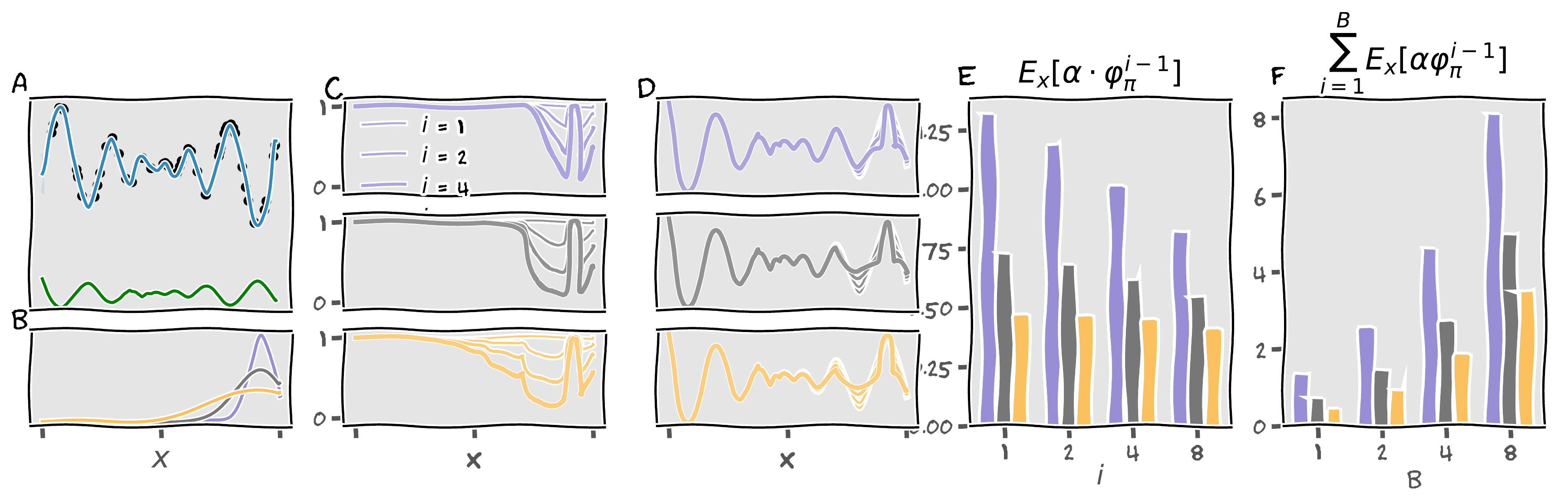}
\caption{\label{fig:org151d5bf}
\textbf{Batch penalty calculations during late iterations.} (A) GP model (blue) during late-stage optimization (e.g. many observations) and associate acquisition function (green). (B) Sampling distributions.(A) GP model (blue) during late-stage optimization (e.g. many observations) and associate acquisition function (green). (C) Expected penalty \(\varphi\) for each sampling distribution and varying iteration \(i\). (D) Expected acquisition penalized by \(\varphi\) for each distribution and varying iteration \(i\). (E) Expected acquisition for iteration \(i\) and (F) batch size \(B\).}
\end{figure}

\begin{figure}[htbp]
\centering
\includegraphics[width=\linewidth]{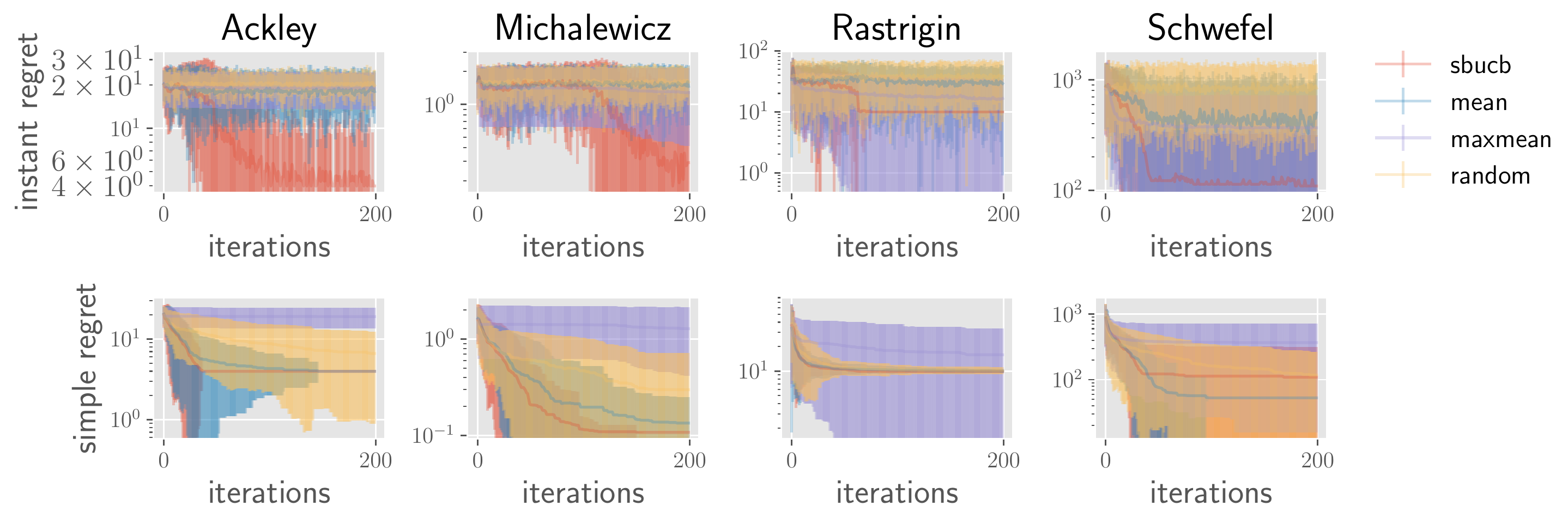}
\caption{\label{fig:org9bf9efb}
\textbf{Error on sequential regret.} 95 \% interval of regret for sequential optimization.}
\end{figure}

\begin{figure}[h]
\centering
\includegraphics[width=\linewidth]{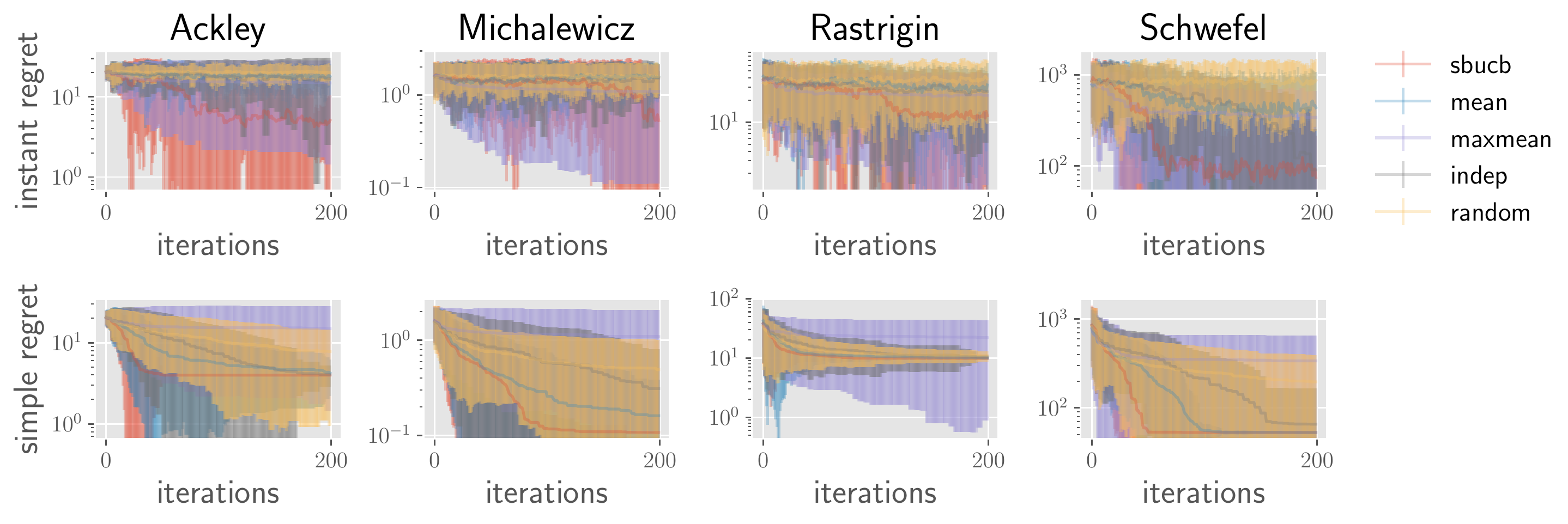}
\caption{\label{fig:orga073835}
\textbf{Error on batch regret} 95 \% interval of regret for batch optimization.}
\end{figure}

\begin{figure}[h]
\centering
\includegraphics[width=\linewidth]{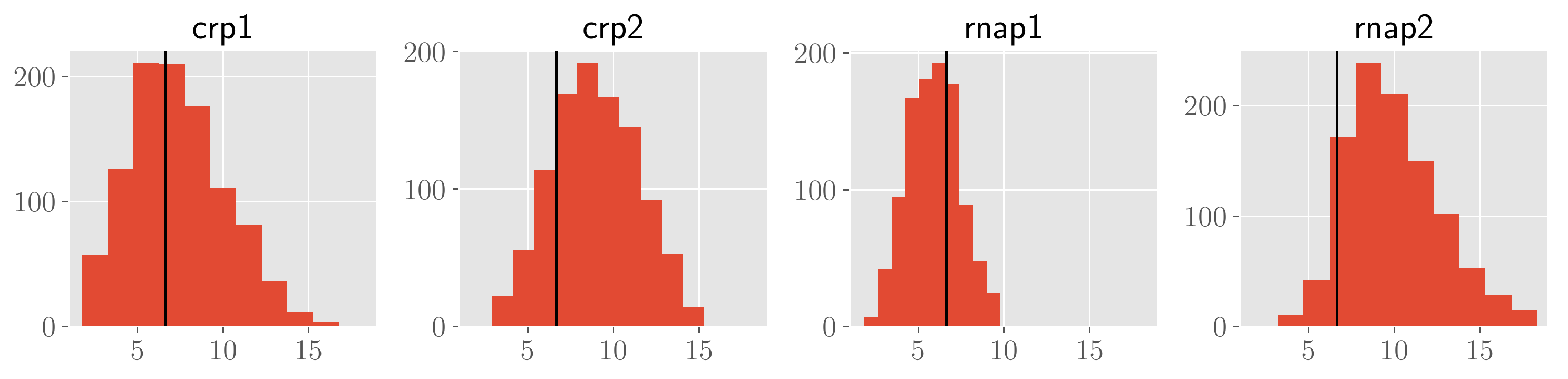}
\caption{\label{fig:org1e1534a}
\textbf{Distribution of LacI promoter fitness for various randomized regions.} Distribution of fitness values for each promoter region. Black line is wild-type fitness.}
\end{figure}

\begin{figure}[h]
\centering
\includegraphics[width=\linewidth]{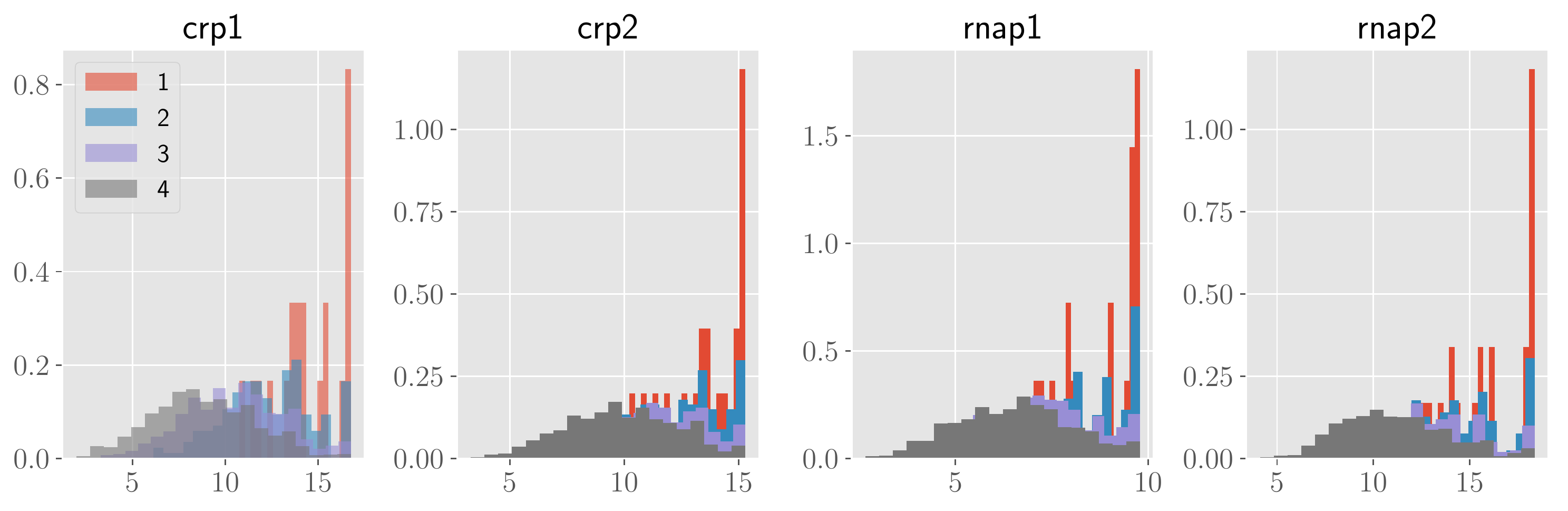}
\caption{\label{fig:orged6750c}
\textbf{Local optima landscape} Local fitness landscape for each promoter region. The distribution of fitness values with hamming distance (1---4) from the optima.}
\end{figure}

\begin{figure}[h]
\centering
\includegraphics[width=\linewidth]{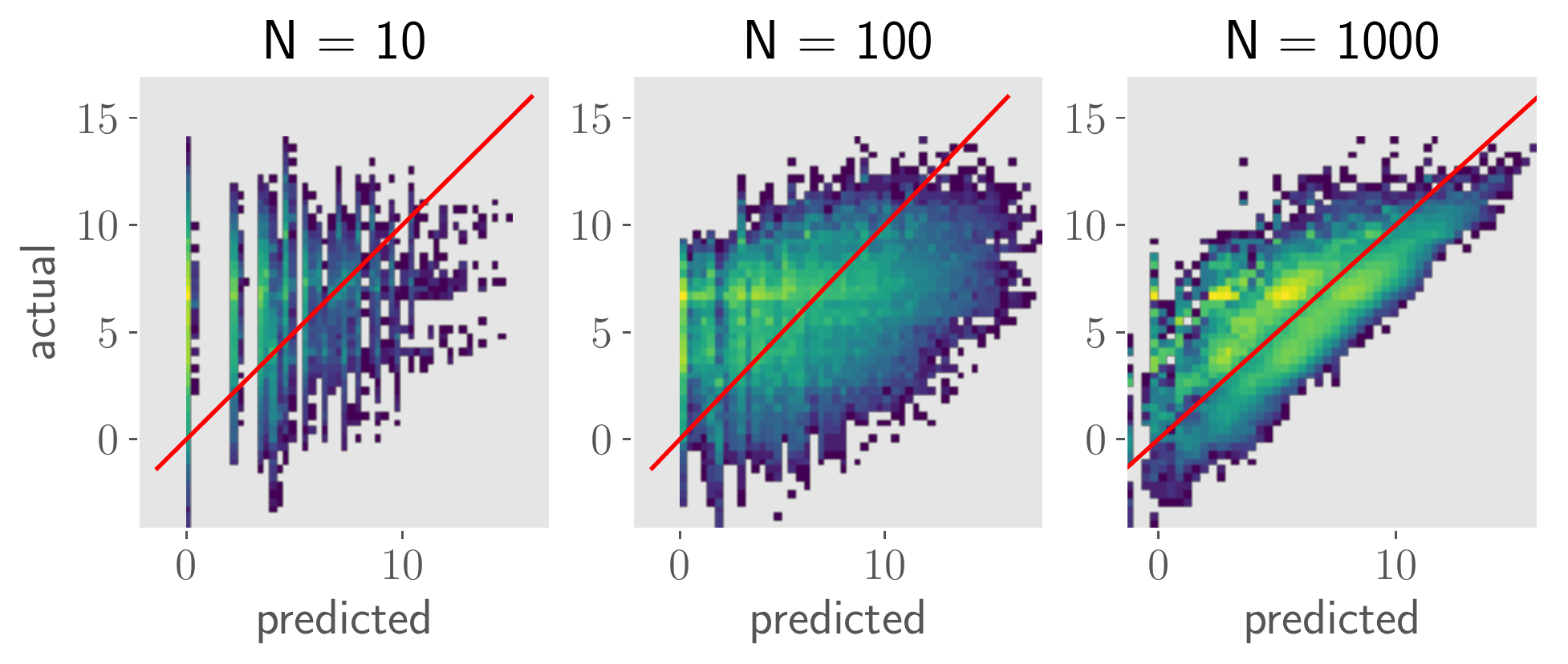}
\caption{\label{fig:org3e2ce6c}
\textbf{GP model prediction of LacI binding for increasing sample size.} GP model predictive mean vs actual fitness for models trained with varying sample size.}
\end{figure}
\end{document}